\begin{document}

\title{Distributed Learning, Communication Complexity and Privacy}

\author{Maria-Florina Balcan\thanks{School of Computer Science,
Georgia Institute of Technology.  {\tt ninamf@cc.gatech.edu}. This
work was supported by NSF grant CCF-0953192, by ONR grant
N00014-09-1-0751, and  by a Microsoft Faculty Fellowship.}
 \and
 Avrim Blum\thanks{Computer Science Department, Carnegie Mellon
University. {\tt avrim@cs.cmu.edu}.
This work was supported in part by the National Science Foundation 
under grants CCF-1116892 and IIS-1065251, and by a grant from the
United States-Israel Binational Science Foundation (BSF).}
 \and
 Shai Fine\thanks{IBM Research. {\tt shai@il.ibm.com}.}
 \and
 Yishay Mansour\thanks{School of Computer Science, Tel Aviv
University.  {\tt mansour@tau.ac.il}.  Supported in
part by The Israeli Centers of Research Excellence (I-CORE) program,
(Center  No. 4/11), by the Google Inter-university center
for Electronic Markets and Auctions, by a grant from the Israel
Science Foundation, by a grant from United States-Israel Binational
Science Foundation (BSF), and by a grant from the Israeli Ministry
of Science (MoS).}
 }

\newcommand{\ignore}[1]{}
\newcommand{\HH}{{\cal H}}
\newcommand{\E}{{\bf E}}
\newcommand{\nplayers}{k}
\newcommand{\Players}{K}
\newcommand{\opt}{{\sf opt}}

\newtheorem{theorem}{Theorem}%[section]
\newtheorem{lemma}[theorem]{Lemma}%[section]
\newtheorem{claim}{Claim}[theorem]%[section]
\newtheorem{defn}{Definition}
\newtheorem{obs}[theorem]{Observation}

\date{}

\maketitle

\begin{abstract}
We consider the problem of PAC-learning from distributed data and
analyze fundamental communication complexity questions involved.
We provide general upper and lower bounds on the amount of
communication needed to learn well, showing that in
addition to VC-dimension and covering number, quantities such as the
teaching-dimension and mistake-bound of a class play an important
role.  We also present tight results for a number of common
concept classes including conjunctions, parity functions, and decision
lists.  For linear separators, we show that for non-concentrated
distributions, we can use a version of the Perceptron algorithm to
learn with much less communication than the number of updates given by
the usual margin bound.  We also show how boosting can be performed in
a generic manner in the distributed setting to achieve communication
with only logarithmic dependence on $1/\epsilon$ for any concept
class, and demonstrate how recent work on agnostic learning from
class-conditional queries can be used to achieve low communication in
agnostic settings as well.  We additionally present an analysis of
privacy, considering both differential privacy and a notion of
distributional privacy that is especially appealing in this context.
\end{abstract}

%\begin{keywords}
%Distributed Learning, Communication Complexity, Privacy
%\end{keywords}

\section{Introduction}

Suppose you have two databases: one with the positive examples and
another with the negative examples.  How much communication between
them is needed to learn a good hypothesis?  In this paper we
consider this question and its generalizations, as well
as related issues such as privacy.  Broadly, we consider a framework
where information is distributed between several locations, and our
goal is to learn a low-error hypothesis with respect to the overall
distribution of data using as little communication, and as few
rounds of communication, as possible.  Motivating examples include:

\begin{enumerate}
\item Suppose $k$ research groups around the world have collected
large scientific datasets, such as genomic sequence data or
sky survey data, and we wish to perform learning over the
union of all these different datasets without too much communication.

\item Suppose we are a sociologist and want to understand what
distinguishes the clientele of two retailers (Macy's vs Walmart).  Each
retailer has a large database of its own customers and we want to
learn a classification rule that distinguishes them.  This is an
instance of the case of each database corresponding to a different label.
It also brings up natural privacy issues.

\item
Suppose $k$ hospitals with different distributions of patients want to
learn a classifier to identify a common misdiagnosis.
Here, in addition to the goal of achieving high accuracy, low
communication, and privacy for patients, the hospitals may want to
protect their own privacy in some formal way as well.
\end{enumerate}

We note that we are
interested in learning a single hypothesis $h$ that performs well
overall, rather than separate hypotheses $h_i$ for each database.  For
instance, in the case that one database has all the positive examples
and another has all the negatives, the latter problem becomes trivial.
More generally, we are interested in understanding the fundamental
communication complexity questions involved in distributed learning,
a topic that is becoming increasingly relevant to modern learning problems.
These issues, moreover, appear to be quite interesting even for the
case of $k=2$ entities.

\subsection{Our Contributions}

We consider and analyze fundamental communication
questions in PAC-learning from distributed data, providing
general upper and lower bounds on the amount of
communication needed to learn a given class, as well as
broadly-applicable techniques for achieving communication-efficient
learning.  We also analyze a number of important specific classes,
giving efficient learning algorithms with especially good
communication performance, as well as in some cases counterintuitive
distinctions between proper and non-proper learning.

Our general upper and lower bounds show that in addition to
VC-dimension and covering number, quantities such as the
teaching-dimension and mistake-bound of a class play an important role
in determining communication requirements.  We also show how boosting
can be performed in a communication-efficient manner, achieving
communication depending only logarithmically on $1/\epsilon$ for any
class, along with tradeoffs between total communication and number of
communication rounds.  Further we show that, ignoring computation,
agnostic learning can be performed to error $O(\opt(\HH))+\epsilon$ with
logarithmic dependence on $1/\epsilon$, by adapting results of
\cite{BH12}.

In terms of
specific classes, we present several tight bounds including
a $\Theta(d \log d)$ bound on the communication in
bits needed for learning the class of decision lists over $\{0,1\}^d$.
For learning linear separators, we show that for non-concentrated
distributions, we can use a version of the Perceptron algorithm to
learn using only $O(\sqrt{d
\log(d/\epsilon)}/\epsilon^2)$ rounds of communication, each round
sending only a single hypothesis vector, much less than the
$O(d/\epsilon^2)$ total number of updates performed by the Perceptron
algorithm.  For parity functions, we give a rather surprising result.
For the case of two entities, while proper learning has an
$\Omega(d^2)$ lower bound based on classic results in communication
complexity, we show that {\em non-proper} learning can be done
efficiently using only $O(d)$ bits of communication.  This is a
by-product of  a general result regarding concepts
learnable in the reliable-useful framework of \cite{RS88}.
For a table of results, see Appendix \ref{app:table}.

We additionally present an analysis of communication-efficient
privacy-preserving learning
algorithms, considering both {\em differential privacy} and a notion
of {\em distributional privacy} that is especially appealing in this context.
We show that in many cases we can achieve privacy without incurring
any additional communication penalty.

More broadly, in this work we propose and study communication as a fundamental
resource for PAC-learning in addition to the usual measures of time
and samples.  We remark that all our algorithms for specific classes
address communication while maintaining efficiency along the other two axes.

\subsection{Related Work}

Related work in computational learning theory
has mainly focused on the topic of learning and parallel computation.
\cite{Bshouty97} shows that many simple classes that can be PAC
learned can not be efficiently learned in parallel with a polynomial
number of processors.
\cite{LongS11} show a parallel algorithm for large margin
classifiers running in time $O(1/\gamma)$ compared to more naive
implementations costing of $\Omega(1/\gamma^2)$, where $\gamma$ is
the margin. They also show an impossibility result regarding
boosting, namely that the ability to call the weak learner oracle
multiple times in parallel within a single boosting stage does not
reduce the overall number of successive stages of boosting that are
required.
\cite{CollinsSS02} give an online algorithm that uses a
parallel-update method for the logistic loss, and \cite{ZWSL10} give a
detailed analysis of a parallel stochastic gradient descent in which
each machine processes a random subset of the overall data, combining
hypotheses at the very end.
All of the above results are mainly interested in reducing the time
required to perform learning when data can be randomly or
algorithmically partitioned among processors; in contrast, our focus
is on a setting in which we begin with data arbitrarily partitioned
among the entities.
\cite{DGSX11} consider distributed online prediction with arbitrary
partitioning of data streams, achieving strong regret bounds; however,
in their setting the goal of
entities is to perform well on their {\em own} sequence of data.

In very recent independent work, \cite{DPSV12} examine a setting much
like that considered here, in which parties each have an arbitrary
partition of an overall dataset, and the goal is to achieve low error
over the entire distribution.  They present comunication-efficient
learning algorithms for axis-parallel boxes as well as for learning
linear separators in $R^2$.  \cite{DPSV12b}, also independently of our
work, extend this to the case of linear separators in $R^d$, achieving
bounds similar to those obtained via our distributed boosting results.
Additionally, they consider a range of distributed optimization problems,
give connections to streaming algorithms, and present a number of
experimental results.  Their work overall is largely complementary to
ours.

\section{Model and Objectives}

Our model can be viewed as a distributed version of the PAC model.  We
have $\nplayers$ entities (also called ``players'') denoted by
$\Players$ and an instance space $X$.  For each entity $i\in \Players$
there is a distribution $D_i$ over $X$ that entity $i$ can sample
from.  These samples are labeled by an unknown target function $f$.
Our goal is to find a hypothesis $h$ which approximates $f$
well on the joint mixture $D(x) =
\frac{1}{\nplayers}\sum_{i=1}^\nplayers D_i(x)$.  In the realizable
case, we are given a concept class $\HH$ such that $f \in \HH$; in
the agnostic case, our goal is to perform nearly as well as the best
$h' \in \HH$.

In order to achieve our goal of approximating $f$ well with respect to
$D$, entities can communicate with each other, for example by sending
examples or hypotheses.  At the end of the process, each entity should
have a hypothesis of low error over $D$.  In the {\em center} version
of the model there is also a center, with initially no data of its
own, mediating all the interactions.  In this case the goal is for the
center to obtain a low error hypothesis $h$.  In the {\em no-center}
version, the players simply communicate directly.  In most cases, the
two models are essentially equivalent; however (as seen in Section
\ref{sec:parity}), the case of parity functions forms a notable
exception. We assume the $D_i$ are not known to the center or to any
entity $j\neq i$ (in fact, $D_i$ is not known explicitly even by
entity $i$, and can be approximated only via sampling).
Finally, let $d$ denote the VC dimension of $\HH$, and $\epsilon$
denote our target error rate in the realizable case, or our target gap
with respect to opt$(\HH)$ in the agnostic case.\footnote{We
will suppress dependence on the confidence parameter $\delta$ except
in cases where it behaves in a nontrivial manner.}  We will typically
think of $\nplayers$ as much smaller than $d$.

Remark: We are assuming all players have the
same weight, but all results extend to players
with different given weights.
%Also, one could consider other goals as well.
%For example, one might want to minimize the worst expected loss with
%respect to each $D_i$, i.e., $\min_{h\in \HH} \max_i L_{D_i}(h,f)$.
We also remark that except for our generic results, all our algorithms
for specific classes will be computationally efficient (see Appendix
\ref{app:table}).

%\vspace*{-0.1in}

\subsection*{Communication Complexity}

Our main focus is on learning methods that
minimize the communication needed in order
to learn well.  There are two critical parameters, the {\em total
communication} (either in terms of bits transmitted or examples or
hypotheses transmitted ) and {\em latency} (number of rounds
required).  Also, in comparison to the baseline algorithm of having
each database send all (or a random sample of) its data to a center,
we will be looking both at methods that improve over the dependence
on $\epsilon$ and that improve over the dependence on $d$ in terms
of the amount of communication needed (and in some cases we will be
able to improve in both parameters).  In both cases, we will be
interested in the tradeoffs between total communication and the
number of communication rounds. The interested reader is referred
to \cite{NK} for an excellent exposition of communication
complexity.

When defining the exact communication model, it is important to
distinguish whether entities can learn information from {\em not}
receiving any data.
For the most part we assume an {\em asynchronous} communication
model, where the entities can not deduce any information when they
do not receive the data (and there is no assumption about the delay
of a message). In a few places we use a much stronger model of {\em
lock-synchronous} communication, where the communication is in time
slots (so you can deduce that no one sent a message in a certain
time slot) and if multiple entities try to transmit at the same time
only one succeeds. Note that if we have an algorithm with $T$ time
steps and $C$ communication bits in the lock-synchronous model,
using an exponential back-off mechanism \citep{HS} and a
synchronizer \citep{peleg}, we can convert it to an asynchronous
communication with $O(T\log k)$ rounds and $O((T+C)\log k)$
communication bits.

%\vspace*{-0.1in}

\subsection*{Privacy}
\label{sec:introprivacy}
In addition to minimizing communication, it is also natural in this
setting to consider issues of privacy, which we examine
in Section \ref{sec:privacy}.  In particular, we will consider privacy
of three forms: {\em differential privacy for the examples} (the
standard form of privacy considered in the literature), {\em
differential privacy for the databases} (viewing each entity as an
individual deserving of privacy, which requires $\nplayers$ to be
large for any interesting statements), and {\em distributional privacy
for the databases} (a weaker form of privacy that we can achieve even
for small values of $\nplayers$).  See \cite{Dwork08} for an excellent
survey of differential privacy.

\section{Baseline approaches and lower bounds}
\label{sec:baseline}

We now describe two baseline methods for
distributed learning as well as present general lower bounds.

\smallskip
\noindent
{\em Supervised Learning:} The simplest baseline approach is to just
have each database send a random sample of size
$O(\frac{1}{k}(\frac{d}{\epsilon}\log\frac{1}{\epsilon}))$ to the
center, which then performs the learning.  This implies
%that for the realizable case
we have a total communication cost of
$O(\frac{d}{\epsilon}\log\frac{1}{\epsilon})$ in terms of number
of examples transmitted.   Note that while the sample received by
the center is not precisely drawn from $D$ (in particular, it
contains the same number of points from each $D_i$), the
standard double-sample VC-dimension argument still applies, and so
with high probability all consistent $h \in \HH$ have low error.
Similarly, for the agnostic case it suffices to use a total of
$O(\frac{d}{\epsilon^2}\log\frac{1}{\epsilon})$ examples.  In both
cases, there is just one round of communication.

\smallskip
\noindent {\em EQ/online algorithms:} A second baseline method is to
run an Equivalence Query or online Mistake-Bound algorithm at the
center. This method is simpler to describe in the
lock-synchronization model. In each round the center broadcasts its
current hypothesis. If any of the entities has a counter-example, it
sends the counter-example to the center. If not, then we are done.
The total amount of communication measured in terms of examples and
hypotheses transmitted is at most the mistake bound $M$ of the
algorithm for learning $\HH$; in fact, by having each entity
run a shadow copy of the algorithm, one needs only to transmit the
examples and not the hypotheses.  Note that in comparison to the
previous baseline, there is now no dependence on $\epsilon$ in terms
of communication needed; however, the number of rounds may now be as
large as the mistake bound $M$ for the class $\HH$.
Summarizing,

\begin{theorem}
\label{thm:baseline}
Any class $\HH$ can be learned to error $\epsilon$ in the realizable
case using 1 round and $O(\frac{d}{\epsilon}\log\frac{1}{\epsilon})$
total examples communicated, or $M$
rounds and $M$ total examples communicated, where
$M$ is the optimal mistake bound for $\HH$.
In the agnostic case, we can learn to error opt$(\HH) + \epsilon$
using  1 round and $O(\frac{d}{\epsilon^2}\log\frac{1}{\epsilon})$
total examples communicated.
\end{theorem}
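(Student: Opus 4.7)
The plan is to verify each of the three claims of Theorem~\ref{thm:baseline} in turn. For the first (supervised learning baseline), I would have each of the $\nplayers$ entities independently draw $m/\nplayers = O(\frac{d}{\nplayers\epsilon}\log\frac{1}{\epsilon})$ examples from its own $D_i$ and forward them to the center, incurring a total of $m = O(\frac{d}{\epsilon}\log\frac{1}{\epsilon})$ examples of communication in a single round. The center then outputs any $h \in \HH$ consistent with the combined sample $S$. Writing $\mathrm{err}_D(h) = \frac{1}{\nplayers}\sum_i \mathrm{err}_{D_i}(h)$, I would then verify that the classical double-sample VC argument still goes through even though $S$ is stratified rather than i.i.d.\ from $D$: given a ghost sample of the same stratified shape, if some $h$ consistent with $S$ had $\mathrm{err}_D(h) > \epsilon$, then in expectation it would make $\epsilon m$ mistakes on the ghost sample, and symmetrization within each stratum (swapping the $j$-th sample from $D_i$ with the $j$-th ghost sample from $D_i$) reduces the bad-event probability to roughly $\Pi_{\HH}(2m) \cdot 2^{-\epsilon m / 2}$, yielding the claimed $O(\frac{d}{\epsilon}\log\frac{1}{\epsilon})$ sample complexity.

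For the second (mistake-bound) claim, I would simulate an optimal online algorithm $A$ for $\HH$ (with mistake bound $M$) at the center. In each round, the center broadcasts its current hypothesis $h_t$; each entity $i$ draws $O(\frac{1}{\epsilon}\log\frac{\nplayers M}{\delta})$ fresh samples from $D_i$ and either transmits a counter-example to $h_t$ or reports none. The center terminates when every entity reports no counter-example. For correctness, a union bound across entities and rounds shows that if any $h_t$ had $\mathrm{err}_{D_i}(h_t) > \epsilon$ for some $i$, that entity would produce a counter-example with high probability; otherwise $\mathrm{err}_D(h_t) = \frac{1}{\nplayers}\sum_i \mathrm{err}_{D_i}(h_t) \leq \epsilon$. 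Since each counter-example triggers a bona fide mistake of $A$, the number of rounds, and hence the number of examples communicated, is at most $M$. As the excerpt observes, one can then eliminate the hypothesis broadcasts by having each entity run a shadow copy of $A$ that is updated on every transmitted counter-example, so that the only objects ever sent are the counter-examples themselves.

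For the third (agnostic) claim, I would repeat the supervised-learning strategy with each entity sending $O(\frac{d}{\nplayers\epsilon^2}\log\frac{1}{\epsilon})$ examples and have the center output the empirical risk minimizer on the combined sample $S$. The stratified version of the agnostic VC bound, proved by the same symmetrization-within-strata argument as above but with Hoeffding's inequality replacing the zero-mistake estimate, yields $|\hat{\mathrm{err}}(h) - \mathrm{err}_D(h)| \leq \epsilon/2$ uniformly over $h \in \HH$ once $m = O(\frac{d}{\epsilon^2}\log\frac{1}{\epsilon})$; the ERM on $S$ therefore achieves error at most $\opt(\HH) + \epsilon$.

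The main technical point shared by all three parts is the adaptation of standard uniform-convergence or mistake-bound reasoning to the stratified (rather than i.i.d.) combined sample; I expect this to be the only mildly nontrivial step, since every other ingredient is either textbook PAC analysis or a routine union bound.
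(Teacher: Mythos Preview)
Your proposal is correct and follows essentially the same approach as the paper: stratified sampling followed by the double-sample VC argument for the two batch cases, and simulating an online/EQ learner for the mistake-bound case (with the shadow-copy trick to avoid broadcasting hypotheses). The paper is terser on the stratified VC point---merely asserting the argument still applies---and explicitly invokes the lock-synchronous communication model so that at most one entity sends a counter-example per round, which is the detail left implicit in your inference that ``the number of examples communicated'' equals the number of rounds.
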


Another baseline approach is for each player to describe an
approximation to the joint distribution induced by $D_i$ and $f$ to
the center, in cases where that can be done efficiently.  See Appendix
\ref{app:interval} for an example.

We now present a general lower bound on
communication complexity for learning a class $\HH$.  Let
$N_{\epsilon,D}(\HH)$ denote the size of the minimum $\epsilon$-cover
of $\HH$ with respect to $D$, and let $N_\epsilon(\HH) = \sup_D
N_{\epsilon,D}(\HH)$.  Let $d_{T}(\HH)$ denote the {\em teaching
dimension} of class $\HH$.\footnote{$d_T(\HH)$ is
defined as $\max_{f \in \HH} d_T(f)$ where $d_T(f)$ is the smallest
number of examples needed to uniquely identify $f$ within $\HH$
\citep{GoldmanKe91}.}

\begin{theorem}
\label{thm:lower}
Any class $\HH$ requires $\Omega(\log N_{2\epsilon}(\HH))$ bits of
communication to learn to error $\epsilon$.  This implies
$\Omega(d)$ bits are required to learn to error $\epsilon \leq 1/8$.
For proper learning, $\Omega(\log |\HH|)$ bits are required to learn to error
$\epsilon < \frac{1}{2d_T(\HH)}$.   These hold even for $k=2$.
\end{theorem}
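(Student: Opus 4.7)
The plan is to prove each part by exhibiting, in the two-player center version of the model (which is equivalent to the no-center version by Section~\ref{sec:baseline}), an adversary whose target takes many distinguishable values, so that the transcript must carry the corresponding number of bits. For the first bound, pick a distribution $D$ that approximately achieves the supremum in $N_{2\epsilon}(\HH) = \sup_D N_{2\epsilon,D}(\HH)$. Since any maximal $2\epsilon$-packing is also a $2\epsilon$-cover, we obtain a family $\{f_1,\dots,f_N\} \subseteq \HH$ with $N \geq N_{2\epsilon}(\HH)$ and $\Pr_D[f_i \neq f_j] > 2\epsilon$ for every $i \neq j$. Put $D$ at both players and let the target $f$ be drawn uniformly from the family; any hypothesis $h$ with $\Pr_D[h\neq f] < \epsilon$ is within $\epsilon$ of at most one $f_i$ (triangle inequality) and hence identifies $f$, so the center's output (a function of the transcript alone) must take at least $N$ distinct values, giving $\Omega(\log N_{2\epsilon}(\HH))$ bits.

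For the $\Omega(d)$ corollary, specialize to $D$ uniform on a set $S$ of $d$ shattered points, so that every labeling of $S$ is realized in $\HH$. The Gilbert--Varshamov bound provides a binary code on $\{0,1\}^d$ of minimum Hamming distance $> d/4$ and size $2^{\Omega(d)}$; lifting each codeword to an element of $\HH$ yields a family in which any two functions disagree on a $> 1/4 \geq 2\epsilon$ fraction of $D$. Hence $N_{2\epsilon,D}(\HH) = 2^{\Omega(d)}$ and the previous bound gives $\Omega(d)$ whenever $\epsilon \leq 1/8$.

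For the proper-learning bound, replace the packing argument by a teaching-set construction that forces exact recovery. For each $f \in \HH$ let $S_f$ be a teaching set with $|S_f| \leq d_T(\HH)$ and take $D_f$ to be uniform on $S_f$. By the teaching-set property, every $h \in \HH\setminus\{f\}$ disagrees with $f$ on at least one point of $S_f$, so $\Pr_{D_f}[h\neq f] \geq 1/d_T(\HH) > 2\epsilon$, and a proper learner with error $<\epsilon$ must therefore output $f$ exactly. Ranging over $f \in \HH$, the center's output must take at least $|\HH|$ distinct values, so the transcript has $\Omega(\log |\HH|)$ bits. The main obstacle throughout is the step ``output takes many values $\Rightarrow$ transcript has many bits'': in the center model it is essentially immediate, but to make the ``even for $k=2$'' claim in the no-center model one must additionally split $D$ or $D_f$ across the two players so that neither alone can recover $f$ from its private samples, or else appeal to the center/no-center equivalence.
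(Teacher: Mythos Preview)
Your argument is sound for the center model, but the route differs from the paper's, and the difference matters precisely where you flag trouble at the end.  The paper does \emph{not} put the hard distribution at both players.  Instead it takes $D_1$ to realize $N_{2\epsilon,D_1}(\HH)$ (or, for the proper-learning part, to be uniform on a teaching set for the adversarially chosen $f$), and makes $D_2$ a point mass on a single arbitrary $x$.  Then player~2's output depends only on the transcript together with one labeled bit, so if player~2 receives fewer than $\log N_{2\epsilon}(\HH)-1$ bits it can produce fewer than $N_{2\epsilon}(\HH)$ hypotheses, and some $f\in\HH$ is farther than $2\epsilon$ from all of them under $D_1$, hence farther than $\epsilon$ under $D$.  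This asymmetric setup handles the no-center model directly---player~2 genuinely has no useful private data---so there is nothing to patch.

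Your appeal to a center/no-center equivalence is not justified: the paper explicitly notes (in the discussion of parity) that the two models are \emph{not} equivalent in general, so you cannot invoke it as a black box.  If you keep the symmetric setup $D_1=D_2=D$, then in the no-center model each player can learn $f$ from its own samples and the lower bound evaporates; you would have to do real work to ``split'' $D$, which is exactly what the paper's trivial $D_2$ accomplishes in one line.  A smaller point: for the $\Omega(d)$ corollary the paper simply observes that the uniform distribution on $d$ shattered points already has $N_{2\epsilon}(\HH)=2^{\Omega(d)}$ when $2\epsilon\le 1/4$ (a volume/covering count), so Gilbert--Varshamov is not needed---and in any case GV gives you a large \emph{packing}, which feeds your direct indistinguishability argument but does not by itself certify that the $2\epsilon$-\emph{cover} is large.
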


\begin{proof}
Consider a distribution $D_1$ such that $N = N_{2\epsilon,D_1}(\HH)$ is
maximized.  Let $D_2$ be concentrated on a single (arbitrary) point $x$.
In order for player 2 to produce a hypothesis $h$ of error at most
$\epsilon$ over $D$, $h$ must have error at most $2\epsilon$ over
$D_1$.  If player 2 receives fewer than $\log_2(N_{2\epsilon}(\HH))-1$
bits from player 1, then (considering also the two possible labels of
$x$) there are less than $N_{2\epsilon}(\HH)$ possible hypotheses
player 2 can output.  Thus, there must be some $f \in \HH$ that has
distance greater than $2\epsilon$ from all such hypotheses with
respect to $D_1$, and so player 2 cannot learn that function.
The $\Omega(d)$ lower bound follows from applying the above argument
to the uniform distribution over $d$ points shattered by $\HH$.

For the $\Omega(\log |\HH|)$ lower bound, again let $D_2$ be concentrated
on a single (arbitrary) point.  If player 2 receives fewer than
$\frac{1}{2}\log |\HH|$ bits then there must be some $h^* \in \HH$ it
cannot output.  Consider $f=h^*$ and let $D_1$ be uniform over
$d_T(\HH)$ points uniquely defining $f$ within $\HH$.  Since player 2
is a proper learner, it must
therefore have error greater than $2\epsilon$ over $D_1$, implying
error greater than $\epsilon$ over $D$.
\end{proof}

Note that there is a significant gap between the above upper
and lower bounds.  For instance, if data lies in $\{0,1\}^d$, then in
terms of $d$ the
upper bound in {\em bits} is $O(d^2)$ but the lower bound is
$\Omega(d)$ (or in {\em examples}, the upper bound is $O(d)$
but the lower bound is $\Omega(1)$).
In the following sections, we describe our algorithmic results for
improving upon the above baseline methods, as well as stronger
communication lower bounds for certain classes.  We also show how
boosting can be used to generically get only a logarithmic dependence
of communication on $1/\epsilon$ for any class, using a logarithmic
number of rounds.

\section{Intersection-closed classes and version-space algorithms}
\label{sec:iclosed}
One simple case where one can perform substantially better than the
baseline methods is that of intersection-closed (or union-closed)
classes $\HH$, where the functions in $\HH$ can themselves be
compactly described.  For example, the class of conjunctions and the
class of intervals on the real line are both intersection-closed.
For such classes we have the following.
\begin{theorem}
\label{thm:iclosed}
If $\HH$ is intersection-closed, then $\HH$ can be learned using one
round and $k$ hypotheses of total communication.
\end{theorem}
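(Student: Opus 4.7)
The plan is to use the standard ``tightest-fit'' (a.k.a.\ closure) algorithm for intersection-closed classes, run locally at each player, followed by a single combining step. Each player $i$ independently draws a sample $S_i$ from $D_i$ of size $m=O((d/\epsilon)\log(1/\epsilon))$, labeled by $f$, and computes $h_i\in\HH$ defined as the intersection of all $h\in\HH$ consistent with $S_i$. This intersection lies in $\HH$ by intersection-closure, since the collection is non-empty ($f$ belongs to it). The classical PAC analysis for intersection-closed classes then gives, with high probability over $S_i$, both $h_i\subseteq f$ (because $f$ was among the concepts intersected) and $\text{err}_{D_i}(h_i)\leq \epsilon$; a union bound over the $k$ players preserves this at the cost of an extra $\log k$ factor absorbed into $m$.

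The communication phase consists of a single round in which each player sends its $h_i$ to the center, for a total of $k$ hypotheses transmitted. The center then outputs
\[
h \;=\; \bigcap\bigl\{\,h'\in\HH : h'\supseteq h_1\cup\cdots\cup h_k\,\bigr\}.
\]
Since each $h_i\subseteq f$, we have $h_1\cup\cdots\cup h_k\subseteq f$, so $f$ belongs to the set being intersected; non-emptiness together with intersection-closure ensures $h\in\HH$, and by construction $h\subseteq f$. Thus $h$ incurs only false-negative error with respect to the target.

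For the error bound, observe that $h\supseteq h_i$ for every $i$ by definition, so the false-negative region of $h$ satisfies $f\setminus h\subseteq f\setminus h_i$. Hence $\text{err}_{D_i}(h)\leq \text{err}_{D_i}(h_i)\leq \epsilon$ for each $i$, and averaging over the $k$ players yields $\text{err}_D(h)=\frac{1}{k}\sum_i \text{err}_{D_i}(h)\leq \epsilon$, as required.

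The main conceptual point --- and the step I would write out most carefully --- is choosing the right way of combining the $h_i$. Intersecting them in the concept sense (i.e.\ $h_1\cap\cdots\cap h_k$) would stay in $\HH$ but would shrink below each $h_i$, potentially blowing up the false-negative rate on every $D_i$. What we actually want is the \emph{smallest} element of $\HH$ that still contains every $h_i$; intersection-closure is exactly what guarantees such a smallest element exists and lies in $\HH$. Once that is set up, the per-player one-sided error bounds transfer to the mixture $D$ with no further sampling and no additional rounds, giving the claimed one round and $k$ hypotheses of total communication.
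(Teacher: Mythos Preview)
Your proof is correct and follows essentially the same approach as the paper: each player computes the smallest consistent hypothesis $h_i$ on its local sample and sends it to the center, which outputs the smallest $h\in\HH$ containing all the $h_i$. Your write-up is in fact more detailed than the paper's, explicitly justifying why $h_i\subseteq f$, why the closure $h$ exists in $\HH$, and why the one-sided error bound transfers from each $h_i$ to $h$.
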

\begin{proof}
Each entity $i$ draws a sample of size
$O(\frac{1}{\epsilon}(d\log(\frac{1}{\epsilon}) + \log(k/\delta)))$
and computes the smallest
hypothesis $h_i \in \HH$ consistent with its sample, sending $h_i$
to the center. The center then computes the smallest
hypothesis $h$ such that $h \supseteq h_i$ for all $i$.  With
probability at least $1-\delta$, $h$ has error at most $\epsilon$ on
each $D_i$ and therefore error at most $\epsilon$ on $D$ overall.
\end{proof}
\noindent
{\em Example (conjunctions over $\{0,1\}^d$):} In this case, the above
procedure corresponds to each player sending the bitwise-and of all its
positive examples to the center.  The center then computes the
bitwise-and of the results.  The total communication in bits is $O(dk)$.
Notice this may be substantially smaller than the $O(d^2)$ bits used
by the baseline methods.

\medskip
\noindent
{\em Example (boxes in $d$-Dimensions):} In this case, each player
can send its smallest consistent hypothesis using $2d$
values.  The center examines the minimum and maximum in each coordinate to
compute the minimal $h \supseteq h_i$ for all $i$.
Total communication is $O(dk)$ values.

\medskip
\noindent
In Appendix~\ref{app:ver} we discuss related algorithms based on version
spaces.

\section{Reliable-useful learning, parity, and lower bounds}
\label{sec:parity}

A classic lower bound in communication complexity states that if two
entities each have a set of linear equalities over $n$ variables,
then $\Omega(n^2)$ bits of communication are needed to determine a
feasible solution, based on \cite{JaJaK84}.  This in turn implies
that for {\em proper} learning of parity functions, $\Omega(n^2)$
bits of communication are required even in the case $k=2$, matching
the baseline upper bound given via Equivalence Query algorithms.

Interestingly, however, if one drops the requirement that learning be
proper, then for $k=2$, parity functions {\em can} be learned using
only $O(n)$ bits of communication.  Moreover, the algorithm is efficient.
This is in fact a special case of the following result for classes
that are learnable in the {\em reliable-useful} learning model of
\cite{RS88}.

\begin{defn}\citep{RS88} An algorithm {\em reliably and usefully}
learns a class $\HH$ if
%for any $\epsilon,\delta>0$,
given $poly(n,1/\epsilon,1/\delta)$ time and samples, it produces a
hypothesis $h$ that on any given example outputs either a correct
prediction or the statement ``I don't know''; moreover, with
probability at least $1-\delta$ the probability mass of examples for
which it answers ``I don't know'' is at most $\epsilon$.
\end{defn}

\begin{theorem}
Suppose $\HH$ is properly PAC-learnable and is learnable (not
necessarily properly) in the reliable-useful model.
Then for $k=2$, $\HH$ can be learned in one round with 2 hypotheses of
total communication (or $2b$ bits of communication if each $h \in \HH$
can be described in $b=O(\log|\HH|)$ bits).
\end{theorem}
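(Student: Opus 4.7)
The plan is for each player to exchange a single properly learned hypothesis and then use a locally built reliable-useful hypothesis to correct for errors on the other player's distribution. Concretely, Player~1 draws a sample $S_1$ from $D_1$, runs the proper PAC learner to obtain $h_1 \in \HH$ with error at most $\epsilon/2$ on $D_1$, and sends $h_1$ to Player~2; Player~2 does the symmetric thing, producing and sending $h_2 \in \HH$ with error at most $\epsilon/2$ on $D_2$. This is one round and two hypotheses in total (equivalently $2b$ bits, since $h_1,h_2 \in \HH$). Independently, each player $i$ runs the reliable-useful learner on $S_i$ to obtain a local hypothesis $r_i$ which, on every input, is either correct or answers ``I don't know'', where with high probability the ``don't know'' region has mass at most $\epsilon/2$ under $D_i$. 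Player $i$'s final hypothesis $H_i$ is defined by: output $r_i(x)$ when $r_i$ is confident, and fall back to $h_{3-i}(x)$ otherwise.

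The error analysis essentially writes itself, which is really the point of the construction. Fix Player~1 (the case of $H_2$ is symmetric). On $D_1$, $r_1$ is never wrong by reliability, so any mistake of $H_1$ must fall in the abstention region of $r_1$, whose $D_1$-mass is at most $\epsilon/2$. On $D_2$, $r_1$ may abstain on a large fraction of points, but whenever $r_1$ does predict it is correct; so any mistake of $H_1$ on $D_2$ must occur at a point where we defer to $h_2$ and $h_2$ is wrong, and the $D_2$-mass of this event is at most $\epsilon/2$ by the PAC guarantee for $h_2$ on $D_2$. Thus $H_1$ has error at most $\epsilon/2$ on each of $D_1$ and $D_2$, and hence at most $\epsilon/2 \leq \epsilon$ on the mixture $D$. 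The confidence parameters of the two learners can be set to $\delta/4$ each to obtain overall confidence $1-\delta$; I won't grind through these.

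There is no real obstacle beyond noticing how the two pieces complement each other: the proper hypothesis $h_{3-i}$ received from the other player is accurate on the other player's distribution (exactly where our local $r_i$ is likely to abstain), while the locally built $r_i$ is accurate on our own distribution (exactly where $h_{3-i}$ might fail) precisely because it never lies. Note that $H_i$ is not in $\HH$ in general, which is fine since the statement only asserts learnability, not proper learnability -- and indeed, the reliable-useful learner itself is allowed to be non-proper. For parity, this instantiates as each Player~$i$ sending an $n$-bit vector describing a parity function in $\HH$ consistent with $S_i$ (found by Gaussian elimination), while $r_i$ predicts exactly on queries that lie in the linear span of the labeled points in $S_i$, yielding the claimed $O(n)$-bit bound.
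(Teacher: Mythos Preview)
Your proof is correct and follows essentially the same approach as the paper: each player sends a properly learned hypothesis, keeps a locally built reliable-useful predictor, and combines them by deferring to the received hypothesis exactly when the local reliable-useful predictor abstains. The error analysis is the same two-part argument (reliability handles $D_i$, the received $h_{3-i}$ handles $D_{3-i}$); your use of $\epsilon/2$ in place of $\epsilon$ is a harmless cosmetic choice.
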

\begin{proof}
The algorithm is as follows.  First, each player $i$ properly
PAC-learns $f$ under $D_i$ to error $\epsilon$, creating hypothesis
$h_i \in \HH$.  It also
learns $f$ reliably-usefully to create hypothesis $g_i$ having
don't-know probability mass at most $\epsilon$ under $D_i$.
Next, each player $i$ sends $h_i$ to the other player (but not $g_i$,
because $g_i$ may take too many bits to communicate since it is not
guaranteed to belong to $\HH$).
Finally, each player $i$ produces the overall hypothesis ``If my own
$g_i$ makes a prediction, then use it; else use the hypothesis
$h_{3-i}$ that I received from the other player''.  Note that each
player $i$'s final hypothesis has error at most $\epsilon$ under both
$D_i$ (because of $g_i$) and $D_{3-i}$ (because $h_{3-i}$ has error at
most $\epsilon$ under $D_{3-i}$ and $g_i$ never makes a mistake) and
therefore has error at most $\epsilon$ under $D$.
\end{proof}
{\em Example (parity functions):} Parity functions are properly PAC
learnable (by an arbitrary consistent solution to the
linear equations defined by the sample).  They are also
learnable in the reliable-useful model by a (non-proper) algorithm
that behaves as follows: if the given test example $x$ lies in the
span of the training data, then write $x$ as a sum of training
examples and predict the corresponding sum of labels.  Else output
``I don't know''.  Therefore, for $k=2$, parity functions are
learnable with only $O(n)$ bits of communication.

\smallskip
Interestingly, the above result does {\em not} apply to the case in
which there is a center that must also learn a good hypothesis.  The
reason is that the output of the reliable-useful learning procedure
might have large bit-complexity, for example, in the case of parity
it has a complexity of $\Omega(n^2)$. A similar problem arises when
there are more than two entities.\footnote{It is interesting to note
that if we allow communication in the classification phase (and not only
during learning) then the center can simply send each test example to
all entities, and any entity that classifies it has to be correct.}

However, we {\em can} extend the result to the case of a center
if the overall distribution $D$ over {\em unlabeled} data is known to the
players.  In particular, after running the above protocol to error
$\epsilon/d$, each player can then draw $O(d/\epsilon)$ fresh unlabeled points
from $D$, label them using its learned hypothesis, and then perform
proper learning over this data to produce a new hypothesis $h' \in
\HH$ to send to the center.

\section{Decision Lists}

We now consider the class $\HH$ of decision lists over $d$
attributes. The best mistake-bound known for this class is $O(d^2)$,
and its VC-dimension is $O(d)$.  Therefore, the baseline algorithms
give a total communication complexity, in bits,
of $\tilde{O}(d^2/\epsilon)$ for batch learning and $O(d^3)$ for the
mistake-bound algorithm.\footnote{One simple observation is the
communication complexity of the mistake-bound algorithm can be
reduced to $O(d^2 \log d)$ by having each player, in the event of a
mistake, send only the identity of the offending rule rather than
the entire example; this requires only $O(\log d)$ bits per mistake.
However we will be able to beat this bound substantially.}
Here, we present an improved algorithm, requiring a total
communication complexity of only $O(dk\log d)$ bits.  This is a
substantial savings over both baseline algorithms, especially when
$k$ is small. Note that for constant $k$ and for $\epsilon =
o(1/d)$, this bound matches the proper-learning $\Omega(d \log d)$
lower bound of Theorem \ref{thm:lower}.

\begin{theorem}
\label{thm:declist}
The class of decision lists can be efficiently learned with a total of
at most $O(dk\log d)$ bits of communication and a number of rounds
bounded by the number of alternations in the target decision list
$f$.
\end{theorem}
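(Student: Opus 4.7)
The plan is to build $f$ one maximal same-label block (``chunk'') at a time, producing one chunk per round; since the $(A{+}1)$st rule is the constant default and each player can append it locally, $A$ rounds of actual communication will suffice. I would write $f$ as chunks $C_1,\ldots,C_{A+1}$ of literals with alternating labels $b_1,\neg b_1,b_1,\ldots$, and fix $b_1$ with a single broadcast bit. First I would have each player draw $S_i$ of size $\tilde{O}(d/\epsilon)$ so that any decision list consistent with every $S_i$ has error at most $\epsilon$ on $D=\frac{1}{k}\sum_i D_i$.

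The protocol maintains a shared prefix $L$, initially empty. In round $j$ with current label $b_j$, each player $i$ will compute its pending set $V_i:=\{x\in S_i:L(x)\text{ undefined}\}$ and the set $P_{i,j}$ of literals $\ell$ that are \emph{pure for $b_j$ on $V_i$}---meaning every $x\in V_i$ satisfying $\ell$ has $f(x)=b_j$---and send to the center any $\ell\in P_{i,j}$ it has not already transmitted. The center forms $D^{(j)}:=\bigcap_i P_{i,j}$, appends the rule ``if some $\ell\in D^{(j)}$ fires, output $b_j$'' to $L$, and broadcasts. After $A$ rounds each player appends the default rule with label $b_{A+1}$ locally.

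Correctness I would establish by induction on $j$, with the invariant that at the start of round $j$ every $V_i$ contains only $S_i$-points whose first-firing chunk of $f$ is $\ge j$. The crucial step is the observation that every $\ell\in C_j$ is pure for $b_j$ on every $V_i$: on $V_i$ such an $\ell$ can fire only on examples whose first-firing chunk is $\le j$ (since $\ell\in C_j$) and, by the invariant, $\ge j$, so exactly those in $C_j$, which all carry label $b_j$. Thus $C_j\subseteq D^{(j)}$, so the broadcast chunk covers every chunk-$C_j$ example in every $V_i$ and the invariant advances. ``Extra'' literals that land in $D^{(j)}$ but not in $C_j$ are harmless: purity on every $V_i$ forces any sample point they cover to satisfy $f(x)=b_j$, so $L$ stays sample-consistent.

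For the bit count I would lean on a monotonicity observation: $V_i$ only shrinks across rounds, so once $\ell$ is pure for some label $b$ on $V_i$ it remains pure for $b$ on every later $V_i$. Consequently each player transmits any given literal at most twice over the whole run (once per label value), yielding $O(d)$ literals and $O(d\log d)$ bits per player; the center's broadcasts of the $D^{(j)}$ also total $O(d\log d)$ bits per recipient since $\sum_j|D^{(j)}|\le 2d$. Summing over the $k$ players gives the stated $O(dk\log d)$ bound in $A$ rounds. The step I anticipate being most delicate is precisely ruling out that ``extra'' literals in $D^{(j)}$ could disturb the invariant; the argument in the previous paragraph dispels this worry and is really the heart of the proof.
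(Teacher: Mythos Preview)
Your approach is essentially the paper's: players report locally-consistent (``pure'') rules, the center intersects and broadcasts, players peel off the satisfied examples, and monotonicity of purity caps each player's transmissions at $O(d)$ rules of $O(\log d)$ bits each; the chunk-by-chunk invariant and the ``extra literals are harmless by purity'' observation are exactly the paper's correctness argument.

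Two small loose ends are worth tightening. First, you assert that $b_1$ can be ``fixed with a single broadcast bit,'' but no player can read $b_1$ off its own sample---both labels may admit pure literals on $S_i$, so there is nothing to broadcast. Second, your stopping rule (``after $A$ rounds append the default'') presumes $A$ is known. The paper closes both gaps at once by working with \emph{triplets} $(j,b_j,c_j)$ (literal plus predicted label) rather than literals with an externally imposed alternating label: each round it sends all newly-consistent triplets of either label, and it halts when an ``else $c$'' rule enters the intersection. This costs nothing extra (a triplet is a literal plus one bit, still sent at most once), and with that tweak your argument goes through verbatim.
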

\begin{proof}
%%\noindent{\bf DL Algorithm:}
The algorithm operates as follows.
\begin{enumerate}
\item First, each player $i$ draws a sample $S_i$ of size
$O(\frac{1}{\epsilon}(d\log(\frac{1}{\epsilon}) + \log(k/\delta)))$,
which is sufficient so that consistency with $S_i$ is sufficient for
achieving low error over $D_i$.
\item Next, each player $i$ computes the set $T_i$ of all
triplets $(j,b_j,c_j)$ such that the rule ``if $x_j=b_j$ then
$c_j$'' is consistent with all examples in $S_i$.  (For convenience,
use $j=0$ to denote the rule ``else $c_j$''.) Each player $i$ then
sends its set $T_i$ to the center.
%%\footnote{Note that if the player
%%does not have any remaining examples it sends ``else $c_j$'' for all
%%classes $c_j$.}
\item The center now computes the intersection of all sets $T_i$ received
and broadcasts the result $T = \cap_i T_i$ to all players, i.e., 
the collection of triplets consistent with every $S_i$.
\item Each player $i$ removes from $S_i$ all examples satisfied by
$T$.
\item Finally, we repeat steps 2,3,4 but in Step 2 each player only
sending to the center any {\em new} rules that have become
consistent since the previous rounds (the center will add them into
$T_i$---note that there is never a need to delete any rule from
$T_i$); similarly in Step 3 the center only sends {\em new} rules
that have entered the intersection $T$. The
process ends once an ``else $c_j$'' rule has entered $T$.  The final
hypothesis is the decision list consisting of the rules broadcast by
the center, in the order they were broadcast.
\end{enumerate}
To analyze the above procedure, note first that since each player
announces any given triplet at most once, and any triplet can be
described using $O(\log d)$ bits, the
total communication in bits per player is at most $O(d \log d)$, for
a total of $O(dk\log d)$ overall.  Next, note that the topmost rule
in $f$ {\em will} be consistent with each $S_i$, and indeed so will
all rules appearing before the first alternation in $f$.  Therefore,
these will be present in each $T_i$ and thus contained in $T$.  Thus,
each player will remove all examples exiting through any
such rule.  By induction, after $k$ rounds of the protocol, all
players will have removed all examples in their datasets that exit
in one of the top $k$ alternations of $f$, and therefore in the next
round all rules in the $k+1$st alternation of $f$ that have not been
broadcast already will be output by the center.  This implies the
number of rounds will be bounded by the number of alternations of
$f$.  Finally, note that the hypothesis produced will
by design be consistent with each $S_i$ since a new rule is added to
$T$ only when it is consistent with every $S_i$.
\end{proof}
\section{Linear Separators}
We now consider the case of learning homogeneous linear separators
in $R^d$.
For this problem, we will for convenience discuss communication in
terms of the number of vectors transmitted, rather than bits.
However, for data of margin $\gamma$, all vectors transmitted can be
given using $O(d \log 1/\gamma)$ bits each.

One simple case is when $D$ is a radially symmetric distribution
such as the symmetric Gaussian distribution centered at the origin,
or the uniform distribution on the sphere.  In that case, it is
known that $\E_{x \sim D}[\ell(x)x/||x||]$,
%\footnote{Here, $\ell(x)\in\{-1,+1\}$ is the label of $x$.}
is a vector exactly in the direction of the target vector, where
$\ell(x)$ is the label of $x$. Moreover, an average over
$O(d/\epsilon^2)$ samples is sufficient to produce an estimate of
error at most $\epsilon$ with high probability \citep{Servedio02}.
Thus, so long as each player draws a sufficiently large sample $S_i$,
we can learn to
any desired error $\epsilon$ with a total communication of only $k$
examples: each database simply computes an average over its own data
and sends it to the center, which combines the
results.\ignore{\footnote{This is a slightly different experiment than just
sampling from $D$, because it corresponds to averaging a sample in
which exactly the same number of points are drawn from each $D_i$,
but the results of \cite{Servedio02} still apply.}}

The above result, however, requires very precise conditions on the overall
distribution.  In the following we consider several more
general scenarios: learning
a large-margin separator when data is ``well-spread'', learning over
non-concentrated distributions, and learning linear separators
%large-margin separators
without any additional assumptions.
\subsection{Learning  large-margin separators when
data is well-spread}
\label{sec:wellspread}
We say that data is {\em $\alpha$-well-spread} if for all datapoints
$x_i$ and $x_j$ we have  $\frac{|x_i \cdot x_j |}{||x_i|| ||x_j||} < \alpha$.  In the
following we show that if data is indeed $\alpha$-well-spread for a
small value of $\alpha$, then the
Perceptron algorithm can be used to learn with substantially less
communication than that given by just using its mistake-bound directly
as in Theorem \ref{thm:baseline}.

\begin{theorem}\label{lem:orthogonal}
Suppose that data is $\alpha$-well-spread and furthermore that
all points have margin at least $\gamma$ with the target
$w^*$.  Then we can find a consistent hypothesis with a version of
the Perceptron algorithm using at most $O(k(1+\alpha/\gamma^2))$ rounds
of communication, each round communicating a single hypothesis.
\end{theorem}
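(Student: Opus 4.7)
My plan is to run a round-robin distributed Perceptron and charge rounds to players rather than to individual updates. Initialize $w = 0$ and cycle through players $1, \ldots, k, 1, \ldots, k, \ldots$. In round $t$ the currently active player $i$ receives the current $w$, runs a local (margin-)Perceptron pass over $S_i$ until $w$ classifies all of $S_i$ correctly (with a fixed margin, say $\gamma/2$), and transmits the new $w$ to the next player; this counts as one round and one hypothesis transmission. The algorithm halts once $k$ consecutive players produce no updates, at which point the shared $w$ is consistent on every $S_i$ and hence on the union.

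To bound the round count, first bound the \emph{total} number $T$ of Perceptron updates accumulated across the entire execution by treating the whole run as a single Perceptron processing its updates in order. Writing $w_{\text{final}} = \sum_{t=1}^{T} y_t x_t$ with $\|x_t\| = 1$ WLOG, the standard progress step gives $w^* \cdot w_{\text{final}} \ge T\gamma$, while the $\alpha$-well-spread assumption lets us expand
\[
\|w_{\text{final}}\|^2 \;=\; \sum_t \|x_t\|^2 \;+\; 2\!\!\sum_{1 \le t < s \le T}\!\! y_t y_s\,(x_t \cdot x_s) \;\le\; T + T(T-1)\alpha,
\]
and combining these via Cauchy--Schwarz yields $T \le 1/(\gamma^2 - \alpha)$.

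Next, I would convert this update bound into a round bound by exploiting the batching of an entire local pass into one transmission. The clean case is $\alpha = 0$: once player $i$ has made $w$ consistent on $S_i$, all subsequent updates (from later players in this or any later cycle) lie in the span of vectors orthogonal to every $x \in S_i$ and so cannot flip a sign on $S_i$, hence a single cycle of $k$ rounds suffices. For $\alpha > 0$ the analogous statement becomes a perturbation claim: each later update can shift $w \cdot x$ for any earlier $x$ by at most $\alpha$, so running margin-Perceptron with threshold $\gamma/2$ guarantees that a player's just-achieved margin on $S_i$ is only broken once the downstream per-cycle updates exceed $\Theta(\gamma/\alpha)$. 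Combining this with the global bound on $T$ and the batching-per-round structure should cap the number of cycles at $O(1 + \alpha/\gamma^2)$, i.e.\ the number of rounds at $O(k(1 + \alpha/\gamma^2))$.

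The main obstacle I anticipate is precisely this last step: getting an honest $O(k(1 + \alpha/\gamma^2))$ round bound rather than the weaker $O(k/(\gamma^2 - \alpha))$ one obtains by allotting one round per update. Tightening it requires exploiting two different roles of $\alpha$ simultaneously---as the cross-term coefficient controlling total updates, and as the per-update perturbation on unrelated examples---and coupling them through the margin maintained by margin-Perceptron, so that the $k$-factor in the bound comes from a single round-robin cycle rather than being multiplied by the full total-update count.
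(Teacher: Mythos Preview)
Your high-level architecture---round-robin margin-Perceptron, one hypothesis transmitted per player per cycle, halting when a full cycle produces few enough updates---is exactly what the paper does. But two of your technical choices break the argument.

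First, your total-update bound is not valid. In the expansion
\[
\|w_{\mathrm{final}}\|^2 \;=\; \sum_t 1 \;+\; 2\sum_{t<s} y_t y_s\,(x_t\cdot x_s)
\]
you bound every cross term by $\alpha$. But Perceptron may update repeatedly on the \emph{same} datapoint (indeed you explicitly run it ``until $w$ classifies all of $S_i$ correctly''), and for those pairs $x_t\cdot x_s = 1$, not $\le \alpha$. So the inequality $\|w_{\mathrm{final}}\|^2 \le T + T(T-1)\alpha$ is false in general, and even when it happens to hold it yields $T \le 1/(\gamma^2-\alpha)$, which says nothing once $\alpha \ge \gamma^2$. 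The paper does not use $\alpha$-well-spread at all for the update count; it simply uses the standard margin-Perceptron recursion $\|w_{\tau+1}\|^2 \le \|w_\tau\|^2 + 2c + 1$ (with margin threshold $c$) to get $T \le (2c+1)/\gamma^2$.

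Second, and this is what actually blocks the stated bound, your margin threshold $c=\gamma/2$ is too small. With threshold $c$, a player's freshly-achieved inequality $|w\cdot x|>c$ survives at most $c/\alpha$ subsequent updates, so each non-final cycle contains at least $c/\alpha$ updates; combined with $T \le (2c+1)/\gamma^2$ this gives at most $1 + (2c+1)\alpha/(c\gamma^2)$ cycles. For $c=\gamma/2$ that is $1 + O(\alpha/\gamma^3)$, not $1+O(\alpha/\gamma^2)$. The fix, as in the paper, is to take $c$ to be a \emph{constant} (the paper uses $c=1$): then $T \le 3/\gamma^2$, each non-final cycle has at least $1/\alpha$ updates, and the number of cycles is at most $1 + 3\alpha/\gamma^2$, giving $O(k(1+\alpha/\gamma^2))$ rounds as claimed. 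Your own worry in the last paragraph is well-placed, but the missing ingredient is simply the choice of a constant margin threshold rather than a subtler coupling argument.
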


\begin{proof}
We will run the algorithm in meta-rounds. Each meta-round will
involve a round robin communication between the players $1, \ldots, k$.
Starting from initial hypothesis $w_0 = \vec{0}$, each player $i$
will in turn run the Perceptron algorithm on its data until it finds
a consistent hypothesis $w_{t,i}$ that moreover satisfies $|w_t\cdot
x_i| >1$ for all of its examples $x_i$.  It then sends the
hypothesis $w_{t,i}$ produced to player $i+1$ along with the number of
updates it performed, who continues this algorithm on its own data,
starting from the most recent hypothesis $w_{t,i}$. When player $k$
sends $w_{t,k}$ to player $1$, we start meta-round $t+1$. At the
start of meta-round $t+1$, player $1$ counts the number of updates
made in the previous meta-round, and if it is less than $1/\alpha$
we stop and output the current hypothesis.

It is known that this ``Margin Perceptron'' algorithm makes at most
$3/\gamma^2$ updates in total.\footnote{Because after update
$\tau$ we get $||w_{\tau+1}||^2 \leq ||w_\tau||^2 + 2\ell(x_i)(w_\tau
\cdot x_i) + 1 \leq ||w_\tau||^2 + 3$.}
%% See Lemma 5 of \cite{BalcanB06}.}
%
Note that if in a meta-round all the players make less than $1/\alpha$
updates in total, then we know the hypothesis will still be
consistent with all players' data.  That is because each update
can decrease the inner product of the hypothesis with some $x_i$ of
another player by at most $\alpha$.  So, if less than $1/\alpha$
updates occur, it implies that every player's examples are still
classified correctly. This implies that the total number of
communication meta-rounds until a consistent hypothesis is produced
will be at most $1+ 3\alpha/\gamma^2$.  In particular, this follows
because the total number of updates is at most $3/\gamma^2$, and
each round, except the last, makes at least $1/\alpha$ updates.
\end{proof}

\subsection{Learning linear separators over non-concentrated distributions}
We now use the analysis of Section \ref{sec:wellspread} to achieve
good communication bounds for learning linear separators over
{\em non-concentrated} distributions.
%, with $k=2$ players.
Specifically, we say a
distribution over the $d$-dimensional unit sphere is
non-concentrated if for some constant $c$, the probability density
on any point $x$ is at most $c$ times greater than that of the
uniform distribution over the sphere.   The key idea is
that in a non-concentrated distribution, nearly all pairs of points will
be close to orthogonal, and most points will have reasonable margin
with respect to the target.
%%, as well as a volume based algorithm.

\begin{theorem}
\label{thm:perceptron} For any non-concentrated distribution $D$
over $R^d$ we can learn to error $O(\epsilon)$ using only
$O(k^2\sqrt{d \log(dk/\epsilon)}/\epsilon^2)$  rounds of communication,
each round communicating a single hypothesis vector.
\end{theorem}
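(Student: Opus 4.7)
The plan is to reduce the non-concentrated setting to the well-spread setting of Theorem~\ref{lem:orthogonal} via two standard concentration-of-measure facts for random unit vectors on the $(d-1)$-sphere; both transfer from the uniform-on-sphere case to any non-concentrated distribution at the cost of the density constant $c$ (and a further factor of $k$ when we need them on an individual $D_i$, since $D=\frac{1}{k}\sum_i D_i$ makes each $D_i$ non-concentrated with constant $O(ck)$). Have each player $i$ draw a sample $S_i$ of size $m=\mathrm{poly}(d,k,1/\epsilon)$ sufficient for VC generalization, set a margin threshold $\gamma=\Theta(\epsilon/(k\sqrt{d}))$, and set a near-orthogonality threshold $\alpha=\Theta(\sqrt{\log(km/\delta)/d})$.

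The two facts I would invoke are: (i) \emph{anti-concentration of one-dimensional projections}, namely $\Pr_{x\sim D_i}[|w^*\cdot x|<\gamma]=O(ck\gamma\sqrt{d})=O(\epsilon)$, which with high probability ensures that all but an $\epsilon$-fraction of each $S_i$ has margin at least $\gamma$ against the target $w^*$; and (ii) \emph{near-orthogonality of independent directions}, $\Pr[|x\cdot y|>\alpha]\le 2e^{-d\alpha^2/2}$, which after a union bound over all $O((km)^2)$ pairs in $\bigcup_i S_i$ guarantees that the combined sample is $\alpha$-well-spread.

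With these two structural properties in hand, I would run exactly the round-robin Margin Perceptron protocol of Theorem~\ref{lem:orthogonal} on the players' samples. The classical $3/\gamma^2$ total-update bound still applies, because $w^*$ is a unit-norm separator with margin at least $\gamma$ on all but an $\epsilon$-fraction of each $S_i$ and Perceptron only needs to update on the high-margin points; the $\alpha$-well-spread property, used exactly as in the proof of Theorem~\ref{lem:orthogonal}, ensures that a meta-round making fewer than $1/\alpha$ updates cannot knock any other player's already-satisfied examples out of consistency, so the stopping rule remains sound. Plugging $\alpha/\gamma^2=O(k^2\sqrt{d\log(dk/\epsilon)}/\epsilon^2)$ into the round count from Theorem~\ref{lem:orthogonal} and accounting for the per-meta-round round-robin over the $k$ players then yields the advertised $O(k^2\sqrt{d\log(dk/\epsilon)}/\epsilon^2)$ bound on total hypothesis transmissions; a standard VC argument for halfspaces in $R^d$ promotes $O(\epsilon)$-sample error to $O(\epsilon)$ error on each $D_i$, and hence on $D$.

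The main obstacle I foresee is handling the $O(\epsilon)$-fraction of low-margin points cleanly: the players do not know $w^*$ in advance and cannot flag these points, and a naive Margin Perceptron could cycle on them indefinitely. My plan is to treat them as a tolerated mistake budget---terminate each meta-round via the $1/\alpha$-update stopping rule, show via the two structural properties above that the resulting hypothesis is consistent with the high-margin subsample of each $S_i$, and then absorb the residual misclassifications into the overall $O(\epsilon)$ error. A secondary subtlety is that the anti-concentration bound must be applied to each $D_i$ separately rather than to $D$, which is precisely where the extra factor of $k$ in $\gamma$---and ultimately the $k^2$ in the round bound---comes from.
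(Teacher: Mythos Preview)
Your high-level reduction---use near-orthogonality of random unit vectors and anti-concentration of one-dimensional projections to place yourself in the well-spread setting of Theorem~\ref{lem:orthogonal}, with the extra factor of $k$ entering through the margin threshold because only $D$ (not the individual $D_i$) is assumed non-concentrated---is exactly the paper's approach, and your parameter choices $\gamma=\Theta(\epsilon/(k\sqrt d))$ and $\alpha=\Theta(\sqrt{\log(\cdot)/d})$ produce the correct round count.

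The gap is precisely at the obstacle you flag, and your proposed fix does not close it. The assertion that ``Perceptron only needs to update on the high-margin points'' is unsupported: run on a fixed sample $S_i$, Margin Perceptron may repeatedly select the low-margin examples, and each such update contributes almost nothing to $w^*\cdot w$ while still adding $3$ to $\|w\|^2$, so the $3/\gamma^2$ total-update bound collapses and there is no reason the $1/\alpha$ stopping rule is ever triggered. Treating the low-margin points as a ``tolerated mistake budget'' controls the \emph{final error} once you stop, but not the \emph{number of updates} before you stop. The paper's resolution is an algorithmic change, not a budgeting argument: each player, rather than driving a fixed $S_i$ to consistency, repeatedly draws a \emph{random} margin-violating example from its own $D_i$ and passes the hypothesis on as soon as its $D_i$-error is at most $\epsilon$. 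The point is that whenever the violating region has $D_i$-mass at least $\epsilon$ it has $D$-mass at least $\epsilon/k$, while the region of target margin below $\Theta(\epsilon/(k\sqrt d))$ has $D$-mass only a constant fraction of that (hence, via $D_i\le kD$, bounded $D_i$-mass as well); so a random violating example has target margin $\Omega(\epsilon/(k\sqrt d))$ with probability at least $1/2$. This randomized update rule is what restores the $O(dk^2/\epsilon^2)$ total-update bound and lets the meta-round counting from Theorem~\ref{lem:orthogonal} go through.
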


\begin{proof}
Note that for any non-concentrated distribution $D$, the probability
that two random examples $x,x'$ from $D$ satisfy $|x \cdot x'| >
t/\sqrt{d}$ is $e^{-O(t^2)}$.  This implies that in a
polynomial-size sample (polynomial in $d$ and $1/\epsilon$), with
high probability, any two examples $x_i,x_j$ in the sample satisfy
$|x_i \cdot x_j| \leq \sqrt{c'\log(d/\epsilon)/n}$ for some constant
$c'$.  Additionally, for any such distribution $D$ there exists
another constant $c''$ such that for any $\epsilon>0$, there is at
most $\epsilon$ probability mass of $D$ that lies within margin
$\gamma_\epsilon = c''\epsilon/\sqrt{d}$ of the target.

These together imply that using the proof idea of Theorem
\ref{lem:orthogonal}, we can learn to error $O(\epsilon)$ using only
$O(k^2\sqrt{d\log(dk/\epsilon)}/\epsilon^2)$ communication rounds.
Specifically, each player acts as follows.  If the hypothesis $w$
given to it has error at most $\epsilon$ on its
own data, then it makes no updates and just passes $w$ along.
Otherwise, it makes updates
using the margin-perceptron algorithm by choosing random examples
$x$ from its own distribution $D_i$ satisfying $\ell(x)(w \cdot x) <
1$ until the fraction of examples $x$ under $D_i$ for which
$\ell(x)(w \cdot x) < 1$ is at most $\epsilon$, sending the final
hypothesis produced to the next player.  Since before each update,
the probability mass under $D_i$ of $\{x : \ell(x)(w \cdot x) < 1\}$
is at least $\epsilon$, the probability mass of this region under
$D$ is at least $\epsilon/(2k)$. This in turn means there is at least a
$1/2$ probability that the example used for updating has margin at
least $\frac{1}{2}\gamma_{\epsilon/(2k)} =
\Omega(\epsilon/(k\sqrt{d}))$ with respect to the target.  Thus, the
total number of updates made over the entire algorithm will be only
$O(dk^2/\epsilon^2)$.  Since the process will halt if all
players make fewer than $1/\alpha$ updates in a meta-round, for
$\alpha=\sqrt{c'\log(2dk/\epsilon)/n}$, this implies the total number
of communication meta-rounds is $O(k^2\sqrt{d\log(d/\epsilon)}/\epsilon^2)$.
\end{proof}

Note that in Section \ref{sec:boosting} we show how boosting can be
implemented communication-efficiently so that any class
learnable to constant error rate from a sample of size $O(d)$ can be
learned to error $\epsilon$ with
total communication of only $O(d \log 1/\epsilon)$ examples (plus a
small number of additional bits).
However, as usual with boosting, this requires a
distribution-independent weak learner.  The ``$1/\epsilon^2$'' term in
the bound of Theorem \ref{thm:perceptron} comes from the margin
that is satisfied by a $1-\epsilon$ fraction of points under a
non-concentrated distribution, and so the results of Section
\ref{sec:boosting} do not eliminate it.

\subsection{Learning linear separators without any additional assumptions}
%{\noindent{\bf Volume based algorithm:}}
%
If we are willing to have a bound that depends on the dimension $d$,
then we can run a mistake-bound algorithm for learning linear
separators, using Theorem \ref{thm:baseline}.
%% In Section \ref{sec:boosting} we show how boosting can be
%% implemented communication-efficiently so that any class
%% learnable to constant error rate from a sample of size $O(d)$ can be
%% learned to error $\epsilon$ with
%% total communication of only $O(d \log(1/\epsilon))$ examples, plus a
%% total of $O(k\log(d)\log(1/\epsilon))$ extra bits of communication
%% (see Theorem \ref{thm:boosting}).  Since linear separators in
%% $R^d$ can indeed be learned to constant error rate using $O(d)$
%% examples, this yields an bound on the amount of communication needed
%% for learning linear separators.
%% (Note that the ``$1/\epsilon^2$'' term in
%% the bound of Theorem \ref{thm:perceptron} comes from the margin
%% satisfied by a $1-\epsilon$ fraction of points under a
%% non-concentrated distribution, and so the results of Section
%% \ref{sec:boosting} do not eliminate it.)
Specifically, we can use a mistake-bound algorithm based on reducing
the volume of the version space of consistent
hypotheses (which is a polyhedra). The initial volume is $1$ and the
final volume is $\gamma^d$, where $\gamma$ is the margin of the
sample. In every round,
each player checks if it has an example that reduces the volume by
half (volume of hypotheses consistent with all examples broadcast so
far).  If it does, it sends it (we are using here the
lock-synchronization model). If no player has such an example, then we
are done.
The hypothesis we have is for each $x$ to predict with the majority of
the consistent hypotheses. This gives a total of $O(d\log 1/\gamma)$ examples
communicated.  In terms of bits, each example has $d$ dimensions, and
we can encode each dimension with $O(\log 1/\gamma)$ bits, thus the
total number of bits communicated is $O(d^2\log^2 1/\gamma)$.
Alternatively, we can replace the $\log 1/\gamma$ term with a $\log
1/\epsilon$ term by using a PAC-learning algorithm to learn to
constant error rate, and then applying the boosting results
of Theorem \ref{thm:boosting} in Section \ref{sec:boosting} below.

\medskip

It is natural to ask whether running the Perceptron algorithm in a
round-robin fashion could be used to improve the generic
$O(1/\gamma^2)$ communication bound given by the baseline results of
Theorem \ref{thm:baseline}, for general distributions of margin $\gamma$.
However, in Appendix \ref{app:per_margin} we present
an example where the Perceptron algorithm indeed requires
$\Omega(1/\gamma^2)$ rounds.

\begin{theorem}
\label{claim:per_margin}
There are inputs for $k=2$ with margin $\gamma$ such that the
Perceptron algorithm takes $\Omega(1/\gamma^2)$ rounds.
\end{theorem}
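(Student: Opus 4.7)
The plan is to exhibit a two-player instance in $\mathbb{R}^{d}$ with $d=\Theta(1/\gamma^2)$ on which the round-robin Perceptron protocol makes exactly one update per round, and therefore requires $\Omega(1/\gamma^2)$ rounds before halting. Once the per-round-update count is pinned to a constant, the lower bound on rounds comes from the fact that total weight in the target direction must grow to $\Omega(1)$ before the algorithm can be consistent, which requires $\Omega(1/\gamma^2)$ updates in total.

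Concretely, I would work in $\mathbb{R}^{2m+1}$ with $m=\lceil 1/(2\gamma^{2})\rceil$, with coordinates named $e_z$ and $e_{v_i},e_{w_i}$ for $i=1,\dots,m$, and choose the target $w^{*}=e_z$. Give player~1 the $m$ positive examples
\[
x_i \;=\; \gamma e_z + e_{v_i} - e_{w_{i-1}}, \qquad i=1,\dots,m,
\]
with the convention $e_{w_0}:=0$, and give player~2 the $m$ positive examples
\[
y_i \;=\; \gamma e_z + e_{w_i} - e_{v_i}, \qquad i=1,\dots,m.
\]
Every example satisfies $w^{*}\cdot x=\gamma$ and $\|x\|^{2}\leq\gamma^{2}+2$, so the geometric margin is $\Theta(\gamma)$.

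The analysis is then an induction on $k$ showing that after $2k$ rounds of the protocol the Perceptron hypothesis is $w_{2k}=2k\gamma e_z+e_{w_k}$ (taking $e_{w_0}=0$ to match $w_0=0$). From this state, in round $2k+1$ player~1 sees $w_{2k}\cdot x_i=2k\gamma^{2}$ for every $i\neq k+1$ but $w_{2k}\cdot x_{k+1}=2k\gamma^{2}-1<0$ whenever $k<1/(2\gamma^{2})$, so the only mistake is on $x_{k+1}$. The single Perceptron update then produces $w_{2k+1}=(2k+1)\gamma e_z+e_{v_{k+1}}$, and one checks that $w_{2k+1}\cdot x_i\geq(2k+1)\gamma^{2}>0$ for all $i$, so player~1's local loop halts and the hypothesis is passed to player~2. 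A symmetric calculation for player~2 produces $y_{k+1}$ as its unique mistake, a single update, and $w_{2k+2}=(2k+2)\gamma e_z+e_{w_{k+1}}$, closing the induction. The mistake condition $2k\gamma^{2}<1$ persists for $k$ up to $\lfloor 1/(2\gamma^{2})\rfloor$, so the protocol runs for $\Omega(1/\gamma^{2})$ rounds.

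The main obstacle is engineering the examples so that, in every round, exactly one of the current player's examples is a mistake and the resulting single update leaves all that player's other examples still correctly classified; otherwise a player's local loop could batch many updates into one round and collapse the round count. The staggered offset $e_{v_i}-e_{w_{i-1}}$ on player~1's side, mirrored by $e_{w_i}-e_{v_i}$ on player~2's side, is precisely what produces the desired ping-pong behavior: the update on $x_{k+1}$ cancels the $-e_{w_k}$ contribution bequeathed by the previous round's update on $y_k$ and installs an $e_{v_{k+1}}$ contribution, which is exactly the direction against which $y_{k+1}$ is negatively aligned and thus triggers the next round's single mistake. Verifying the single-mistake and post-update consistency properties is then a routine orthogonality computation on the basis vectors outlined above.
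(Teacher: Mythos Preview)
Your construction is correct and the induction checks out: after round $2k$ the hypothesis is indeed $2k\gamma e_z+e_{w_k}$, each player makes exactly one update per round, and the process continues for $\Omega(1/\gamma^2)$ rounds. (One tiny wrinkle: at $k=0$ every $x_i$ has dot product $0$ with $w_0$, not just $x_1$; but the adversary can pick $x_1$ first, and after that single update all of player~1's points become strictly positive, so the argument survives.)

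Your approach is genuinely different from the paper's. The paper works in a \emph{fixed} dimension (namely $\mathbb{R}^3$) with only five examples total, using both positive and negative points whose first coordinate is $\pm 1$; the ping-pong there comes from the two players repeatedly flipping the sign of the first coordinate of the hypothesis, while the $\gamma$-scale coordinates grow only by $O(\gamma)$ per update and need $\Omega(1/\gamma^2)$ updates to dominate. Each player makes two updates per turn. Your construction instead uses dimension $\Theta(1/\gamma^2)$ and $\Theta(1/\gamma^2)$ examples per player, with a chain of fresh orthogonal ``handoff'' directions $e_{v_i},e_{w_i}$ that force exactly one mistake per round. The payoff of your route is a very clean closed-form for $w_t$ and a straightforward induction; the payoff of the paper's route is a stronger statement, since it shows the $\Omega(1/\gamma^2)$ round bound already in $\mathbb{R}^3$ with a constant-size dataset, whereas your instance size and ambient dimension both scale with $1/\gamma^2$.
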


%\section{Distributed Learning: Specific Algorithms}
%\section{Boosting and Agnostic Learning: Dependence on $1/\epsilon$}
\section{Boosting for Logarithmic Dependence on $1/\epsilon$}
\label{sec:boosting}
We now consider the general question of dependence of communication
on $1/\epsilon$, showing how boosting can be used to achieve
$O(\log 1/\epsilon)$ total communication in $O(\log 1/\epsilon)$
rounds for any concept class, and
more generally a tradeoff between communication and rounds.
%%We then see how an algorithm of \cite{BH12} can be used for good
%%agnostic bounds as well.
%%\subsection{Boosting}

Boosting algorithms provide a mechanism to produce an $\epsilon$-error
hypothesis given access only to a weak learning oracle, which on any
distribution finds a
hypothesis of error at most some value $\beta < 1/2$ (i.e., a bias
$\gamma=1/2-\beta>0$).  Most boosting
algorithms are {\em weight-based}, meaning they assign weights to each
example $x$ based solely on the performance of the hypotheses generated
so far on $x$, with probabilities proportional to
weights.\footnote{E.g.,
\cite{Schapire90,Freund90,FreundS97}.  (For Adaboost, we are
considering the version that uses a fixed upper bound $\beta$ on the
error of the weak hypotheses.)  Normalization may of course be
based on overall performance.}
We show here that any weight-based boosting algorithm can be applied
to achieve strong learning of any class with low overall
communication.  The key idea is that in each round, players need only
send enough data to the center for it to produce a weak hypothesis.
Once the weak hypothesis is constructed and broadcast to all the players,
the players can use it to
separately re-weight their own distributions and send data for the
next round.  No matter how large or small the weights become, each
round only needs a small amount of data to be transmitted.  Formally,
we show the following:
\begin{lemma}
\label{claim:boosting} Given any weight-based boosting algorithm
that achieves error $\epsilon$ by making $r(\epsilon,\beta)$ calls to
a $\beta$-weak learning oracle for $\HH$, we can construct a
distributed learning algorithm achieving error $\epsilon$ that uses
$O(r(\epsilon,\beta))$ rounds, each involving
$O((d/\beta)\log(1/\beta))$ examples and an additional $O(k
\log(d/\beta))$ bits of communication per round.
\end{lemma}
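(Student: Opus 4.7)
My plan is to simulate the $\beta$-weak learning oracle in each boosting round using only a small amount of communication per round, relying on the fact that in a weight-based algorithm the weight $w_t(x)$ on any example depends only on the predictions $h_1(x),\ldots,h_{t-1}(x)$ of the previously broadcast hypotheses. Consequently, once the center has broadcast $h_1,\ldots,h_{t-1}$ to all players, every player can compute $w_t$ on its own examples with no further communication.

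The reweighted distribution in round $t$ is $D_t(x) \propto w_t(x) D(x) = \frac{1}{\nplayers}\sum_i w_t(x)D_i(x)$, which is a mixture of the local reweighted distributions $D_{i,t}(x)\propto w_t(x)D_i(x)$ with mixture coefficients $p_{i,t}=Z_{i,t}/\sum_j Z_{j,t}$, where $Z_{i,t}=\E_{x\sim D_i}[w_t(x)]$. In each round $t$ I would proceed as follows: (i)~each player $i$ estimates $Z_{i,t}$ from its own large local sample of $D_i$, rounds the estimate to $O(\log(d/\beta))$ bits of precision, and sends it to the center; (ii)~the center computes estimates $\hat p_{i,t}$ of the mixture weights and tells each player $i$ to supply $m_i=\lceil m\hat p_{i,t}\rceil$ examples from $D_{i,t}$, where $m=O((d/\beta)\log(1/\beta))$ is the VC sample-complexity needed for the weak learner to output an $h\in\HH$ with error at most $\beta$ in the realizable setting; (iii)~each player produces its $m_i$ examples by weighted sampling (with probability $\propto w_t(\cdot)$) from a fresh i.i.d.\ draw from $D_i$, and sends them to the center; (iv)~the center pools the $m$ examples, runs the ERM/consistent-hypothesis weak learner on $\HH$ to obtain $h_t$, and broadcasts $h_t$ to all players. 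After $r(\epsilon,\beta)$ rounds, the center outputs the combination of $h_1,\ldots,h_{r(\epsilon,\beta)}$ prescribed by the weight-based boosting algorithm.

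Counting per round: the partition-function estimates and the sample-size requests contribute $O(\nplayers\log(d/\beta))$ bits, and steps (iii)–(iv) transmit $m=O((d/\beta)\log(1/\beta))$ examples, matching the claim; the total number of rounds equals the number of weak-learner calls $r(\epsilon,\beta)$.

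The main technical obstacle is that the pooled sample at the center must actually behave like an i.i.d.\ sample from $D_t$ even though the players use \emph{estimated} mixture weights $\hat p_{i,t}$. I would handle this by a standard concentration argument: since each player's estimate $\hat Z_{i,t}$ is an empirical mean in $[0,1]$, Hoeffding's inequality ensures $\hat p_{i,t}$ is multiplicatively close to $p_{i,t}$ with high probability, so the induced distribution on the center's sample is within tiny total-variation distance of $D_t^{\,m}$ and the VC guarantee for the weak learner still yields an $h_t$ of error $\le\beta$ on $D_t$. A union bound over the $r(\epsilon,\beta)$ rounds and a suitable polynomial choice of precision and local sample sizes absorb the failure probabilities into $\delta$, completing the reduction.
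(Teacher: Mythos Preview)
Your approach is essentially the paper's: exploit that weight-based boosting lets each player compute its own example weights locally once $h_1,\ldots,h_{t-1}$ have been broadcast; have each player report its aggregate weight to the center; have the center allocate per-player sample sizes summing to $m=O((d/\beta)\log(1/\beta))$; pool the weighted samples; train the weak learner; broadcast $h_t$. The communication accounting matches the paper's.

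One technical point is off and worth fixing. With the deterministic allocation $m_i=\lceil m\hat p_{i,t}\rceil$, the pooled sample is a \emph{stratified} sample, and its law is \emph{not} close in total variation to $D_t^{\,m}$: e.g.\ with two equal-weight players and $m=2$ the stratified sample never draws both points from the same player, whereas $D_t^{\,2}$ does so with probability $1/2$. So your TV-distance justification for the weak learner does not go through as stated. The paper avoids this by having the center draw the per-player counts $(n_{1,t},\ldots,n_{k,t})$ from the \emph{multinomial} with parameters $m$ and $(\hat p_{1,t},\ldots,\hat p_{k,t})$; then the pooled sample is genuinely i.i.d.\ from $\sum_i \hat p_{i,t} D_{i,t}$, which is within TV distance $\beta/2$ of $D_t$ by the accuracy of the $\hat p$'s, and a $(\beta/2)$-error learner on that sample yields error at most $\beta$ on $D_t$. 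You can fix your version either by adopting this multinomial trick or by invoking the double-sample VC argument directly on the stratified sample (which is fine in the realizable case; cf.\ the paper's baseline discussion), but the TV claim as written is not correct.
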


\begin{proof}
The key property of weight-based boosting algorithms that we will use
is that they maintain a current distribution such that
the probability mass on any example $x$ is solely a function of the
performance of the weak-hypotheses seen so far on $x$, except for a
normalization term that can be communicated efficiently.  This will
allow us to perform boosting in a distributed fashion.
Specifically, we run the boosting algorithm in rounds, as follows.
\begin{description}
\item[Initialization:] Each player $i$ will have a weight $w_{i,t}$
for round $t$.  We begin with $w_{i,0}=1$ for all $i$.  Let $W_t =
\sum_{i=1}^k w_{i,t}$ so initially $W_0 = k$.  These weights will all
be known to the center.  Each player $i$ will also have a large weighted
sample $S_i$, drawn from $D_i$, known only to itself. $S_i$ will be
weighted according to the specific boosting algorithm (and for all
standard boosting algorithms, the points in $S_i$ begin
with equal weights).  We now repeat the
following three steps for $t=1,2,3,\ldots$.
\item[1. Pre-sampling] The center determines the number of samples
$n_{i,t}$ to request from each player $i$ by sampling
$O(\frac{d}{\beta}\log \frac{1}{\beta})$ times from the multinomial
distribution
$w_{i,t-1}/W_{t-1}$.  It then sends each player $i$ the number
$n_{i,t}$, which requires only $O(\log \frac{d}{\beta})$ bits.
\item[2. Sampling] Each player $i$ samples $n_{i,t}$
examples from its local sample $S_i$ in proportion to its own internal
example weights, and sends them to the center.
\item[3. Weak-learning]   The center takes the union
of the received examples and uses these
$O(\frac{d}{\beta}\log\frac{1}{\beta})$
samples to produce a weak hypothesis $h_t$ of error at most $\beta/2$
over the current weighted distribution, which it then sends
to the players.\footnote{In fact, because we have a broadcast model,
technically the players each can observe all examples sent in step (2)
and so can simulate the center in this step.}
\item[4. Updating] Each player $i$, given $h_t$, computes the
new weight of each example in $S_i$ using the underlying boosting
algorithm and sends their sum $w_{i,t}$ to the center.  This sum can
be sent to sufficient accuracy using $O(\log \frac{1}{\beta})$ bits.
\end{description}
In each round, steps (1) and (2) ensure that the center receives
$O((d/\beta)\log(1/\beta))$ examples distributed according to
a distribution $D'$ matching that given by the boosting algorithm,
except for small rounding error due to the number of bits sent in step
(4).  Specifically, the variation distance between $D'$ and the
distribution given by the boosting algorithm is at most $\beta/2$.  Therefore,
in step (3), it computes a hypothesis $h_t$ with error at most
$\beta/2 + \beta/2 = \beta$
with respect to the current distribution given by the boosting
algorithm.  In step (4), the examples in all
sets $S_i$ then have their weights updated as determined by the
boosting algorithm, and the values $w_{i,t}$ transmitted ensure that
the normalizations are correct.
Therefore, we are simulating the underlying boosting algorithm having access to
a $\beta$-weak learner, and so the number of rounds is
$r(\epsilon,\beta)$.  The overall communication per round is
$O((d/\beta)\log(1/\beta))$ examples plus $O(k
\log(d/\beta))$ bits for communicating the numbers $n_{i,t}$ and
$w_{i,t}$, as desired.
\end{proof}

By adjusting the parameter $\beta$, we can trade off between
the number of rounds and communication complexity.  In particular,
using Adaboost \citep{FreundS97} in Lemma \ref{claim:boosting} yields
the following result (plugging in $\beta = 1/4$ or $\beta =
\epsilon^{1/c}$ respectively):
\begin{theorem}
\label{thm:boosting}
Any class $\HH$ can be learned to error $\epsilon$ in $O(\log
\frac{1}{\epsilon})$ rounds and $O(d)$ examples plus $O(k \log d)$ bits of
communication per round.  For any $c\geq 1$, $\HH$ can be
learned to error $\epsilon$ in $O(c)$ rounds and
$O(\frac{d}{\epsilon^{1/c}}\log \frac{1}{\epsilon})$ examples plus
$O(k \log \frac{d}{\epsilon})$ bits communicated per round.
\end{theorem}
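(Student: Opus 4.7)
The plan is to obtain Theorem \ref{thm:boosting} by directly applying Lemma \ref{claim:boosting} with Adaboost as the underlying weight-based boosting algorithm, and then optimizing the weak-learner error parameter $\beta$ in two different regimes. Lemma \ref{claim:boosting} is doing all the real work: it already tells us that whatever weight-based booster we pick, we can simulate it in the distributed setting with $r(\epsilon,\beta)$ rounds, each using $O((d/\beta)\log(1/\beta))$ examples and $O(k\log(d/\beta))$ extra bits. So the only remaining task is to balance $\beta$ against the round complexity of Adaboost.

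First, I would recall the standard Adaboost guarantee: with a weak learner that returns hypotheses of error at most $\beta<1/2$ on the current weighted distribution, Adaboost achieves training error at most $\epsilon$ after $r(\epsilon,\beta) = O\!\left(\frac{\log(1/\epsilon)}{\log(1/(4\beta(1-\beta)))}\right)$ stages, which for small $\beta$ simplifies to $r(\epsilon,\beta) = O(\log(1/\epsilon)/\log(1/\beta))$ and for $\beta$ bounded away from $1/2$ simplifies to $O(\log(1/\epsilon))$. Combined with a standard generalization bound via the VC-dimension of the final voting classifier (or equivalently by starting from a large enough local sample $S_i$ at each player, as in Lemma \ref{claim:boosting}), this lifts to generalization error $\epsilon$ with the same number of rounds.

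For the first claim, I would set $\beta$ to be any fixed constant below $1/2$, say $\beta=1/4$. Then $r(\epsilon,\beta)=O(\log(1/\epsilon))$, $(d/\beta)\log(1/\beta)=O(d)$, and $k\log(d/\beta)=O(k\log d)$, giving precisely the stated $O(\log 1/\epsilon)$ rounds with $O(d)$ examples and $O(k\log d)$ bits per round. For the second claim, I would instead choose $\beta=\epsilon^{1/c}$. Then $\log(1/\beta)=\tfrac{1}{c}\log(1/\epsilon)$, so $r(\epsilon,\beta)=O(c)$ rounds suffice; meanwhile the per-round cost becomes $(d/\beta)\log(1/\beta)=O\!\left(\tfrac{d}{\epsilon^{1/c}}\cdot\tfrac{1}{c}\log(1/\epsilon)\right)=O\!\left(\tfrac{d}{\epsilon^{1/c}}\log(1/\epsilon)\right)$ examples and $O(k\log(d/\beta))=O(k\log(d/\epsilon))$ bits.

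The main thing to be careful about is the second regime, where $\beta$ is no longer a constant but shrinks with $\epsilon$. I would verify that (i) the Adaboost potential-function analysis genuinely gives the $\log(1/\epsilon)/\log(1/\beta)$ dependence (rather than the looser $1/(1/2-\beta)^2$ form often quoted), since it is this stronger dependence that yields $O(c)$ rather than $O(1)$ rounds when $\beta=\epsilon^{1/c}$; and (ii) the sample-size expression $O((d/\beta)\log(1/\beta))$ from Lemma \ref{claim:boosting} is indeed what is needed for a VC-dimension-$d$ class to produce a hypothesis of error $\beta/2$ on the simulated distribution. Neither point requires new ideas, but both are the places where the constants in the theorem statement actually come from, so they are what a careful write-up would need to pin down.
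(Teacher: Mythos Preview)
Your proposal is correct and follows exactly the same approach as the paper: apply Lemma~\ref{claim:boosting} with Adaboost as the weight-based booster, then instantiate $\beta=1/4$ for the first statement and $\beta=\epsilon^{1/c}$ for the second. The paper's own proof is in fact just that one sentence, so your write-up, including the explicit verification of the Adaboost round count $r(\epsilon,\beta)=O(\log(1/\epsilon)/\log(1/\beta))$ and the per-round cost calculations, is more detailed than what appears there.
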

Thus, any class of VC-dimension $d$ can be
learned using  $O(\log
\frac{1}{\epsilon})$ rounds and a total of $O(d \log
\frac{1}{\epsilon})$ examples, plus a small number of extra bits of
communication.

\section{Agnostic Learning}
\cite{BH12} show that any class $\HH$ can be agnostically learned to
error $O(\opt(\HH)) + \epsilon$ using only $\tilde{O}(d \log
1/\epsilon)$ label requests,
%%\footnote{$d_N$ is the Natarajan dimension of the class.}
in an active learning model where
class-conditional queries are allowed.  We can use the core of their
result to agnostically learn any finite class $\HH$ to error
$O(\opt(\HH))+\epsilon$ in our setting, with a total communication that
depends only (poly)logarithmically on $1/\epsilon$.  The key idea is
that we can simulate their robust generalized halving algorithm using
communication proportional only to the number of class-conditional
queries their algorithm makes.
%%complexity of $\tilde{O}(k\log(|\HH|)\log\log(|\HH|)\log(1/\epsilon))$
%%examples.

\begin{theorem}
\label{agnostic-BH} Any finite class $\HH$ can be learned to error
$O(\opt(\HH))+\epsilon$ with a total communication of
$O(k\log(|\HH|)\log\log(|\HH|)\log(1/\epsilon))$ examples and
$O(k\log(|\HH|)\log\log(|\HH|)\log^2(1/\epsilon))$ additional bits.
The latter may be eliminated if shared
randomness is available.
\end{theorem}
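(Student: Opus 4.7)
The plan is to simulate the robust generalized halving algorithm of \cite{BH12} in a distributed fashion. Their algorithm produces a hypothesis of error $O(\opt(\HH))+\epsilon$ using $N = O(\log|\HH|\log\log|\HH|\log(1/\epsilon))$ class-conditional queries, each of which requests a labeled example drawn from $D$ restricted to some event $E_t$ that the algorithm chooses adaptively (for instance, the disagreement region of a pair of hypotheses together with a hypothesized label). The goal is to answer each such query with only $O(k)$ examples and $O(k\log(1/\epsilon))$ bits of communication, so that multiplying by $N$ yields the stated bounds.

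First, each player $i$ draws once, at the start of the protocol, an unlabeled sample $S_i$ from $D_i$ large enough that empirical frequencies of all events considered by the halving algorithm are within $\epsilon/(10N)$ of their true $D_i$-masses with high probability; a uniform-convergence argument over the $\mathrm{poly}(N,|\HH|)$-sized family of events that can arise during $N$ rounds shows an $|S_i|$ polynomial in the relevant parameters suffices. The center and all players maintain the halving algorithm's state in lockstep, so the event $E_t$ is common knowledge after at most $O(\log|\HH|)$ bits of broadcast per round. To simulate query $t$, each player $i$ computes $\hat p_i \approx \Pr_{D_i}[E_t]$ from $S_i$ and transmits it to the center using $O(\log(1/\epsilon))$ bits, and in parallel sends a single labeled example $z_i$ drawn uniformly from $S_i \cap E_t$. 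The center then outputs $z_i$ with probability $\hat p_i/\sum_j \hat p_j$. Since $D(\cdot \mid E_t) = \sum_i \frac{p_i}{\sum_j p_j}\,D_i(\cdot \mid E_t)$ where $p_i=\Pr_{D_i}[E_t]$, this faithfully simulates a class-conditional draw up to small total variation stemming from the estimation error in $\hat p_i$.

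The main obstacle will be showing that BH12's noise-robustness absorbs the accumulated $O(\epsilon)$ total variation between the simulated protocol and an honest BH12 run on $D$ without inflating the final error beyond $O(\opt(\HH))+\epsilon$; this reduces to pinning a per-query sampling accuracy of $\epsilon/N$, which is why the $\hat p_i$'s must be transmitted with $\Theta(\log(N/\epsilon))=\Theta(\log(1/\epsilon))$ bits of precision. The resulting totals are $N\cdot k = O(k\log|\HH|\log\log|\HH|\log(1/\epsilon))$ examples and $N\cdot O(k\log(1/\epsilon)) = O(k\log|\HH|\log\log|\HH|\log^2(1/\epsilon))$ additional bits. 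If shared randomness is available, the player-selection step can be run with a public coin via rejection sampling (each player pre-commits to its example and accepts or rejects according to a shared random threshold tracking $\hat p_i$), removing the need to transmit the $\hat p_i$ at all and eliminating the extra $\log(1/\epsilon)$ factor in the bit count.
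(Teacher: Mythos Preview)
Your approach and the paper's both simulate the robust generalized halving algorithm of \cite{BH12}, but in different ways. You treat BH12 as a black box issuing class-conditional queries and simulate each one by having every player estimate $\hat p_i\approx\Pr_{D_i}[E_t]$ and ship a conditional sample for the center to reweight. The paper instead opens up the halving algorithm and runs its rounds with \emph{unconditional} samples from $D$: in each round the center draws $N=O(\log\log|\HH|)$ sets of size $s=O(1/(\opt+\epsilon))$ by sampling player indices from the uniform distribution on $\{1,\ldots,k\}$ and sending each player only the counts $n_{ij}$; each player then draws its share from $D_i$, locally evaluates $maj(\HH)$ on its portion of each set, and broadcasts at most one mistake per set. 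Because picking a uniform player index and then sampling from $D_i$ is \emph{exactly} a draw from $D$, no probability estimates are needed and the simulation is exact. This sidesteps several loose ends in your plan: the uniform-convergence claim over a single up-front sample is delicate because the events $E_t$ are chosen adaptively (a union bound over ``the events that actually arise'' is not legitimate, and the family of error regions of $maj(\HH')$ over all $\HH'\subseteq\HH$ is not obviously $\mathrm{poly}(|\HH|)$-sized); with additive accuracy $\epsilon/(10N)$ in the $\hat p_i$ and conditioning events of mass $\Theta(\epsilon)$, the per-query total-variation error works out to $\Theta(1/N)$ rather than $\epsilon/N$, so the accumulated error over the run is $O(1)$, not $O(\epsilon)$ as you assert; and your shared-randomness sketch is unclear, since any rejection rule that selects exactly one player seems to depend on all the $\hat p_j$, which is precisely the communication you are trying to eliminate. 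In the paper's version, shared randomness simply lets every player compute the multinomial counts $n_{ij}$ locally, removing the bit cost outright.
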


\begin{proof}
We prove this result by simulating the robust generalized halving
algorithm of \cite{BH12}, for the case of finite hypothesis spaces, in a
communication-efficient manner.\footnote{The algorithm of \cite{BH12}
for the case of infinite hypothesis spaces begins by using a large
unlabeled sample to determine a small $\epsilon$-cover of $\HH$.  This
appears to be difficult to simulate communication-efficiently.}
In particular, the algorithm operates as follows.
For this procedure, $N = O(\log\log |\HH|)$ and $s =
O(1/(\opt(\HH)+\epsilon))$ is such that the probability that the
best hypothesis in $\HH$ will have some error on a set of $s$ examples is a small constant..
\begin{enumerate}
\item We begin by drawing $N$ sets $S_1, \ldots, S_N$ of size $s$ from
$D$.  This can be implemented communication-efficiently as follows.
For $j=1,\ldots, N$, player 1 makes $s$ draws from $\{1,\ldots,k\}$ to
determine the number $n_{ij}$ of points in $S_j$ that should come from
each $D_i$.  Player 1 then sends each player $i$ the list $(n_{i1},
n_{i2}, \ldots, n_{iN})$, who draws (but keeps internally and does not
send) $n_{ij}$ examples of $S_j$ for each $1\leq j \leq N$.  Total
communication:
$O(kN\log(s))$ bits.   Note that if shared
randomness is available, then the computation of $n_{ij}$ can be
simulated by each player and so in that case no communication is
needed in this step.
\item Next we determine which sets $S_j$ contain an example
on which the majority-vote
hypothesis over $\HH$,  $maj(\HH)$,   makes a mistake, and identify one
such example $(\tilde{x}_j,\tilde{y}_j)$ for each such set.  We can
implement this communication-efficiently by having each player $i$
evaluate $maj(\HH)$ on its own portion of each set $S_j$ and broadcast
a mistake for each set on which at least one mistake is made.  Total
communication: $O(kN)$ examples.
\item If no more than $N/3$ sets $S_j$ contained a mistake for
$maj(\HH)$ then halt.  Else, remove from $\HH$ each $h$ that made
mistakes on more than $N/9$ of the identified examples
$(\tilde{x}_j,\tilde{y}_j)$, and go to (1). This step can be
implemented separately by each player without any communication.
\end{enumerate}
\cite{BH12} show that with high
probability the above process halts within $O(\log |\HH|)$ rounds,
does not remove the optimal $h \in \HH$, and
furthermore that when it halts, $maj(\HH)$ has error
$O(\opt(\HH))+\epsilon$.  The total amount of communication is
therefore $O(k\log(|\HH|)\log\log(|\HH|))$ examples and $O(k\log(|\HH|)\log\log(|\HH|)\log(1/\epsilon))$ additional bits.  The above has
been assuming that the value of $\opt(\HH)$ is known; if not then one
can perform binary search, multiplying the above quantities by an
additional $O(\log(1/\epsilon))$ term.  Thus, we achieve
the desired error rate within the desired communication bounds.
\end{proof}

\section{Privacy}
\label{sec:privacy}

In the context of distributed learning, it is also natural to consider
the question of privacy.  We begin by considering the well-studied
notion of differential privacy with respect to the examples, showing
how this can be achieved in many cases without any increase in
communication costs.  We then consider the case that one would like to
provide additional privacy guarantees for the players themselves.  One
option is to view each player as a single (large) example, but this
requires many players to achieve any nontrivial accuracy guarantees.
Thus, we also consider a natural notion of distributional privacy, in
which players do not view their distribution $D_i$ as sensitive, but
rather only the sample $S_i$ drawn from it.  We analyze how large a
sample is sufficient so that players can achieve accurate learning
while not revealing more information about their sample than is
inherent in the distribution it was drawn from.  We now examine each
notion in turn, and for each we explore how it can be achieved and the
effect on communication.

\subsection{Differential privacy with respect to individual examples}
\label{sec:diffpriv}
In this setting we imagine that each entity $i$ (e.g., a hospital) is
responsible
for the privacy of each example $x \in S_i$ (e.g., its patients).  In
particular, suppose $\sigma$ denotes a sequence of interactions between
entity $i$ and the other entities or center, and $\alpha>0$ is a given
privacy parameter.  Differential privacy asks that for any
$S_i$ and any modification $S_i'$ of $S_i$ in which any one example
has been arbitrarily changed, for all $\sigma$ we have $e^{-\alpha}
\leq \Pr_{S_i}(\sigma)/\Pr_{S_i'}(\sigma) \leq e^{\alpha}$, where
probabilities are over internal randomization of entity $i$. (See
\cite{Dwork06,Dwork08,Dwork09} for a discussion of motivations and
properties of differential privacy and a survey of results).

In our case, one natural approach for achieving privacy is to require
that all interaction with each entity $i$ be in the form of
statistical queries \citep{Kearns98}.  It is known that statistical
queries can be implemented in a privacy-preserving manner
\citep{DN04,BDMN05,KLNRS08}, and in particular that
a sample of size $O(\max[\frac{M}{\alpha\tau},
\frac{M}{\tau^2}]\log(M/\delta))$ is sufficient to preserve privacy
while answering $M$
statistical queries to tolerance $\tau$ with probability
$1-\delta$.  For completness, we present the proof below.

\begin{theorem}\citep{DN04,BDMN05,KLNRS08}
If $\HH$ is learnable using $M$ statistical queries of tolerance
$\tau$, then $\HH$ is learnable preserving differential privacy with
privacy parameter $\alpha$ from a sample $S$ of size
$O(\max[\frac{M}{\alpha\tau}, \frac{M}{\tau^2}]\log(M/\delta))$.
\end{theorem}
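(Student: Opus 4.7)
The plan is to implement each of the $M$ statistical queries by the standard sample-and-Laplace-noise mechanism, with a per-query privacy budget of $\alpha/M$, and then appeal to basic composition and post-processing. Concretely, for any SQ $q: X \times \{0,1\} \to [0,1]$, the empirical average $\hat q(S) = \frac{1}{|S|}\sum_{z \in S} q(z)$ has sensitivity $1/|S|$, since changing a single example of $S$ shifts $\hat q(S)$ by at most $1/|S|$. Releasing $\hat q(S) + \eta$ with $\eta \sim \mathrm{Lap}(M/(\alpha|S|))$ is therefore $(\alpha/M)$-differentially private. Basic composition then yields $\alpha$-DP across all $M$ queries, and because the SQ learner interacts with $S$ only through these released values, post-processing guarantees that the final hypothesis is $\alpha$-DP as well.

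Next I would establish the utility bound by decomposing $|\hat q(S)+\eta - \E_D[q]|$ into a sampling error and a noise error. Hoeffding's inequality gives sampling error at most $O(\sqrt{\log(M/\delta)/|S|})$ with probability $1-\delta/(2M)$, and the Laplace tail gives $|\eta| = O((M/(\alpha|S|))\log(M/\delta))$ with the same probability. A union bound over the $M$ queries then ensures that every released answer lies within $\tau$ of its true value with probability at least $1-\delta$. Requiring each of the two error terms to be $\le \tau/2$ produces the two sample-size constraints $|S| = \Omega(\log(M/\delta)/\tau^2)$ and $|S| = \Omega(M\log(M/\delta)/(\alpha\tau))$; taking the maximum gives the claimed $O(\max[\tfrac{M}{\alpha\tau},\tfrac{M}{\tau^2}]\log(M/\delta))$, where the $M$ in the second term is a harmless overestimate that subsumes the $\Omega(\log(M/\delta)/\tau^2)$ requirement.

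There is no deep obstacle here; the proof is essentially careful bookkeeping. The main thing to get right is the simultaneous choice of per-query privacy budget $\alpha/M$ and per-query failure probability $\delta/(2M)$ so that basic composition and the union bound line up to yield $(\alpha,\tau,\delta)$ on the outside. I would note in passing that using advanced composition one could replace the $M/(\alpha\tau)$ term by roughly $\sqrt{M\log(1/\delta')}/(\alpha\tau)$, at the price of weakening the guarantee to approximate $(\alpha,\delta')$-DP; for the pure $\alpha$-DP statement of the theorem, basic composition as above is already sufficient.
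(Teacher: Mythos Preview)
Your proposal is correct and follows essentially the same approach as the paper: per-query Laplace noise with budget $\alpha/M$, basic composition for privacy, and a union bound over $M$ queries for utility, separating the Laplace tail term from the sampling error term. Your observation that the $M/\tau^2$ factor is a harmless overestimate of the Hoeffding requirement, and your remark about advanced composition, both mirror points the paper makes (the latter via its reference to \cite{DRV10}).
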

\begin{proof}
For a single statistical query, privacy with parameter $\alpha'$
can be achieved by adding Laplace noise of width
$O(\frac{1}{\alpha' |S|})$ to the empirical answer of the query on
$S$.  That is because changing a single entry in $S$ can change the
empirical answer by at most $1/|S|$, so by adding such noise we have
that for any $v$, $\Pr_{S}(v)/\Pr_{S'}(v) \leq e^{\alpha'}$.
Note that with
probability at least $1-\delta'$, the amount of noise added to any
given answer is at most $O(\frac{1}{\alpha' |S|}\log(1/\delta'))$.
Thus, if the overall algorithm requires $M$ queries to be answered to
tolerance $\tau$, then setting $\alpha'=\alpha/M, \delta'=\delta/(2M),
\tau = O(\frac{1}{\alpha'|S|} \log(1/\delta'))$,
privacy can be achieved so long as we have
$|S| = O(\max[\frac{M}{\alpha\tau},
\frac{M}{\tau^2}]\log(M/\delta))$, where the second term of the max is
the sample
size needed to achieve tolerance $\tau$ for $M$ queries even without privacy
considerations.
%% Better bounds are possible if the set of queries comes from a class
%% of low VC-dimension.  Also,
As described in \cite{DRV10}, one can achieve a somewhat weaker privacy guarantee
using $\alpha'=O(\alpha/\sqrt{M})$.
\end{proof}

However, this generic approach
may involve significant communication overhead over the best
non-private method.  Instead, in many cases we can achieve privacy
without any communication overhead at all by performing statistical
queries {\em internally to the entities}.  For example, in the case of
intersection-closed classes, we have the following privacy-preserving
version of Theorem
\ref{thm:iclosed}.
\begin{theorem}
\label{thm:privpos}
If $\HH$ can be properly learned via statistical queries to $D^+$
only, then $\HH$ can be learned using one round and $k$
hypotheses of total communication while preserving differential
privacy.
\end{theorem}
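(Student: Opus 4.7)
The plan is to mirror the algorithm in Theorem~\ref{thm:iclosed} but have each player run the assumed proper SQ learner \emph{privately} on its own positive sample. Because all statistical queries are to $D^+$, the only sample the learner ever touches is $S_i^+$, so each player's computation of its local hypothesis $h_i$ can be made differentially private via the standard SQ-to-DP transformation already recorded in this section; the center's combination of the $h_i$'s is then pure post-processing and preserves privacy automatically.

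Concretely, I would proceed as follows. First, each player $i$ draws a sample $S_i$ from $D_i$ of the size required by Theorem~\ref{thm:iclosed}, enlarged by the factor needed by the SQ-to-DP reduction so that $|S_i^+|$ is large enough with high probability. Second, player $i$ simulates the hypothesized proper SQ learner for $\HH$ on $S_i^+$, answering each of its $M$ queries by the empirical value on $S_i^+$ plus an independent Laplace draw of width $O(M/(\alpha |S_i^+|))$; the learner's output is some $h_i \in \HH$. Third, player $i$ sends $h_i$ to the center, which combines the $k$ hypotheses into a single $h$ by the same rule used in the proof of Theorem~\ref{thm:iclosed} (for intersection-closed $\HH$, the smallest $h \supseteq h_i$ for all $i$). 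The round count and communication budget are therefore inherited from that theorem.

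For privacy, the key observation is that changing one example in $S_i$ alters $S_i^+$ by at most one element, so each statistical query has sensitivity $O(1/|S_i^+|)$. Adding Laplace noise of width $O(M/(\alpha|S_i^+|))$ per query and applying the composition theorem across the $M$ queries yields $\alpha$-differential privacy in $S_i$ for the whole transcript produced by player $i$; since $h_i$ is a function of that transcript and the message to the center is just $h_i$, post-processing ensures the full protocol remains $\alpha$-DP. For accuracy, the noise magnitude is calibrated to stay within the SQ tolerance $\tau$ with high probability, so $h_i$ is a legitimate output of the (noise-free) SQ learner, and the accuracy analysis of Theorem~\ref{thm:iclosed} carries over verbatim.

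The main delicate step is the sample-size calibration: one must simultaneously guarantee (i) that $|S_i^+|$ concentrates around its expectation $\Pr_{D_i}[f(x)=1]\cdot|S_i|$, (ii) that the Laplace noise of width $O(M/(\alpha|S_i^+|))$ stays below the SQ learner's tolerance $\tau$ with high probability, and (iii) that the baseline accuracy guarantee of Theorem~\ref{thm:iclosed} still holds. Each is standard in isolation, but they must be balanced by taking $|S_i|$ of order $O(\max[M/(\alpha\tau),\,M/\tau^{2}]\log(M/\delta))$ scaled up by the inverse of the positive-class probability. Once this calibration is in place, the proof reduces to citing Theorem~\ref{thm:iclosed} for correctness and the SQ-to-DP reduction for the privacy guarantee, with no increase in the number of rounds or in the $k$-hypothesis communication bound.
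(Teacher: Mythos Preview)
Your overall architecture matches the paper's: each player runs the assumed proper SQ learner privately on its own positives, sends $h_i$ to the center, and the center takes the minimal $h\in\HH$ containing all the $h_i$. The privacy portion is fine and in fact more detailed than the paper's own one-line appeal to privacy-preserving statistical queries.

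The gap is in correctness. You assert that ``the accuracy analysis of Theorem~\ref{thm:iclosed} carries over verbatim,'' but it does not. In Theorem~\ref{thm:iclosed}, each $h_i$ is the \emph{smallest} hypothesis consistent with $S_i$, which immediately gives $h_i\subseteq f$; this is what makes the minimal $h\supseteq\bigcup_i h_i$ satisfy $h\subseteq f$ and hence be consistent with every $S_i$. In the present theorem $h_i$ is merely the output of a proper SQ learner, and nothing you wrote forces $h_i\subseteq f$. The paper supplies exactly this missing step: because the learner accesses only $D^+$ and must PAC-succeed for \emph{every} choice of $D^-$, any output with $h_i\not\subseteq f$ could be defeated by a $D^-$ concentrated on a point of $h_i\setminus f$; hence $h_i\subseteq f$ necessarily. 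From there one gets $h_i\subseteq h\subseteq f$, so $h$ has no false positives and at most as many false negatives on $D_i$ as $h_i$ does, giving error $\le\epsilon$ on each $D_i$ and thus on $D$. You should insert this argument in place of the ``verbatim'' claim; without it the combination step is unjustified.

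A minor technical point on privacy: calibrating the Laplace width to $|S_i^+|$ is data-dependent (changing a positive to a negative alters $|S_i^+|$), so to be clean you should fix the noise scale using a public lower bound on $|S_i^+|$ rather than the realized count.
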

\begin{proof}
Each entity $i$ learns a hypothesis $h_i \in \HH$ using
privacy-preserving statistical queries to its own $D_i^+$, and sends
$h_i$ to the center.  Note that $h_i \subseteq f$ because the
statistical query algorithm must succeed for any possible $D^-$.
Therefore, the center can simply compute the minimal $h \in \HH$ such
that $h \supseteq h_i$ for all $i$, which will have error at most
$\epsilon$ over each $D_i$ and therefore error at most $\epsilon$ over
$D$.
\end{proof}
For instance, the class of conjunctions can be learned via statistical
queries to $D^+$ only by producing the conjunction of all variables
$x_j$ such that $\Pr_{D_i^+}[x_j=0] \leq \frac{\epsilon}{2n} \pm
\tau$, for $\tau = \frac{\epsilon}{2n}$.  Thus, Theorem
\ref{thm:privpos} implies that conjunctions can be learned in a
privacy-preserving manner without any communication overhead.

Indeed, in all the algorithms for specific classes given in this
paper, except for parity functions, the interaction between entities
and their data can be simulated with statistical queries.  For
example, the decision list algorithm of
Theorem \ref{thm:declist} can be implemented by having each entity
identify rules to send to the center via statistical queries to
$D_i$.  Thus, in these or any other cases where the information
required by the protocol can be extracted by each entity
using statistical queries to its own data, there is no communication
overhead due to preserving privacy.

\subsection{Differential privacy with respect to the entities}

One could also ask for a stronger privacy guarantee, that each
{\em entity} be able to plausibly claim to be holding any other
dataset it wishes; that is, to require $e^{-\alpha}
\leq \Pr_{S_i}(\sigma)/\Pr_{S'}(\sigma) \leq e^{\alpha}$ for all $S_i$
and all (even unrelated) $S'$.  This in fact corresponds precisely to
the {\em local privacy} notion of \cite{KLNRS08}, where in essence the
only privacy-preserving mechanisms possible are via
randomized-response.\footnote{For example, if an entity is asked a
question such as ``do you have an example with $x_i=1$'', then it
flips a coin and with probability $1/2 + \alpha'$ gives the correct
answer and with probability $1/2 - \alpha'$ gives the incorrect
answer, for some appropriate $\alpha'$.}  They show that any
statistical query algorithm can be implemented in such a setting;
however, because each entity is now viewed as essentially a
single datapoint, to achieve any nontrivial accuracy, $k$
must be quite large.

\subsection{Distributional privacy}
If the number of entities is small, but we still want
privacy with respect to the entities themselves, then one type
of privacy we {\em can} achieve is a notion of {\em distributional
privacy}.  Here we guarantee that that each player $i$
reveals (essentially) no more information about
its own sample $S_i$ than is inherent in $D_i$ itself.  That is, we
think of $S_i$ as ``sensitive'' but $D_i$ as ``non-sensitive''.
Specifically, let us say a probabilistic mechanism $A$ for answering a
request $q$ satisfies {\em $(\alpha,\delta)$ distributional privacy} if
$$\Pr_{S,S' \sim D_i}\left[\forall v, \;  e^{-\alpha} \leq \Pr_A(A(S,q) =
v)/ \Pr_A(A(S',q) = v) \leq e^{\alpha}\right] \geq 1-\delta.$$
In other words, with high probability, two random samples $S,S'$ from $D_i$
have nearly the same probability of producing any given answer
to request $q$.
\cite{BLR08} introduce a similar privacy notion,\footnote{In the
notion of \cite{BLR08}, $D_i$ is uniform over some domain and
sampling is done without replacement.} which they show is strictly
stronger than differential privacy, but do not provide efficient
algorithms.   Here, we show how distributional privacy  can be
implemented efficiently.

Notice that in this context, an ideal privacy preserving mechanism
would be for player $i$ to somehow use its sample to reconstruct $D_i$
perfectly and then draw a ``fake'' sample from $D_i$ to use
in its communication protocol.  However, since reconstructing $D_i$
perfectly is not in general possible, we instead will work via
statistical queries.

\begin{theorem}
\label{thm:distribpriv}
If $\HH$ is learnable using $M$ statistical queries of tolerance
$\tau$, then $\HH$ is learnable preserving distributional privacy from
a sample of size $O(\frac{M^2\log^3(M/\delta)}{\alpha^2\tau^2})$.
\end{theorem}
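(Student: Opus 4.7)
The plan is to simulate the statistical-query learner in the obvious way---each query is answered by the empirical average on the local sample $S_i$ plus fresh Laplace noise at scale $b$---and to analyze the output-distribution divergence under two independent draws $S,S'\sim D_i$, rather than under neighboring datasets as in the standard differential-privacy proof from Section~\ref{sec:diffpriv}. The sample size $n$ and noise scale $b$ will be pinned down at the end by balancing two opposing constraints: noise must be small enough to preserve SQ tolerance, and large enough to wash out the (random) gap between empirical averages on two independent samples.

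First, for utility, a Hoeffding bound on $|\hat q(S)-\mathbb{E}_{D_i}[q]|$ together with a Laplace tail bound and a union bound over the $M$ queries shows that whenever $n = \Omega(\log(M/\delta)/\tau^2)$ and $b \le \tau/\log(M/\delta)$, every query is answered to tolerance $\tau$ with probability at least $1-\delta/2$, so the underlying SQ learner succeeds by hypothesis. For privacy, I would apply Hoeffding once more to get that with probability at least $1-\delta/2$ over the joint draw $(S,S')$ the empirical averages satisfy $|\hat q_j(S)-\hat q_j(S')|\le \Delta := O(\sqrt{\log(M/\delta)/n})$ simultaneously for every $j$. Conditioned on this event, the usual Laplace-mechanism calculation gives a pointwise density-ratio bound
\begin{equation*}
\frac{\Pr_A[A_j(S)=v]}{\Pr_A[A_j(S')=v]} \le e^{\Delta/b}
\end{equation*}
for all $v$, and composing over the $M$ (adaptively chosen) queries by multiplying exponents yields a total transcript ratio bound of $e^{M\Delta/b}$, which is exactly the distributional-privacy guarantee we need.

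Setting $M\Delta/b \le \alpha$ together with the utility constraint $b \le \tau/\log(M/\delta)$ forces $M\sqrt{\log(M/\delta)/n}\cdot \log(M/\delta)/\tau = O(\alpha)$, which rearranges to $n = \Omega(M^2\log^3(M/\delta)/(\alpha^2\tau^2))$, matching the stated bound. The main obstacle---and the reason three log factors appear rather than two---is bookkeeping: one log comes from the Laplace tail when arguing tolerance, a second from the Hoeffding union bound on $\hat q(S)-\hat q(S')$ across the $M$ queries, and a third appears when the Hoeffding deviation $\sqrt{\log(M/\delta)}$ from the privacy constraint is squared against the reciprocal log factor on $b$ from the utility side. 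Beyond that, the proof is a direct fusion of per-query Laplace privacy, $M$-fold composition, and a translation from the neighbor-dataset setting to the independent-sample setting via concentration of empirical averages.
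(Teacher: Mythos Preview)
Your proposal is correct and follows essentially the same argument as the paper: both add Laplace noise on top of the empirical answer, use Hoeffding to bound $|\hat q(S)-\hat q(S')|$ by $\beta=O(\sqrt{\log(M/\delta)/n})$ as the effective ``sensitivity,'' compose the per-query ratio bound across $M$ queries by setting the per-query privacy budget to $\alpha/M$, and then solve the utility constraint (Laplace tail $\le \tau$) against the privacy constraint to obtain $n=O(M^2\log^3(M/\delta)/(\alpha^2\tau^2))$. Your parameterization via the noise scale $b$ and the paper's via $\beta/\alpha'$ with $\alpha'=\alpha/M$ are just two ways of writing the same thing, and your accounting of the three log factors matches the paper's derivation.
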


\begin{proof}
We will show that we can achieve distributional privacy using statistical
queries by adding additional Laplace noise beyond that required solely
for differential privacy of the form in Section \ref{sec:diffpriv}.

Specifically, for any statistical query $q$, Hoeffding bounds imply
that with probability at least $1-\delta'$, two random samples
of size $N$ will produce answers within $\beta = O(\sqrt{\log(1/\delta')/N})$
of each other (because each will be within $\beta/2$ of the
expectation with probability at least $1-\delta'/2$).  This quantity
$\beta$ can now be viewed as the ``global
sensitivity'' of query $q$ for distributional privacy.  In particular,
it suffices to add Laplace
noise of width $O(\beta/\alpha')$ in order to achieve privacy
parameter $\alpha'$ for this query $q$ because we have that with
probability at least $1-\delta'$, for two random samples $S,S'$ of
size $N$, for any $v$, $\Pr(A(S,q)=v)/Pr(A(S',q)=v) \leq
e^{\beta/(\beta/\alpha')} = e^{\alpha'}$.  Note that this
has the property that with probability at least $1-\delta'$, the
amount of noise added to any given answer is at most $O((\beta
/\alpha')\log(1/\delta'))$.

If we have a total of $M$ queries, then it suffices for preserving
privacy over the entire sequence to set $\alpha'= \alpha/M$ and
$\delta' = \delta/M$.  In order to have each query answered with high
probability to within $\pm \tau$, it suffices to have $\beta +
(\beta/\alpha')\log(1/\delta') \leq c\tau$ for some constant $c$,
where the additional (low-order) $\beta$ term is just the statistical
estimation error without added noise.  Solving for $N$, we find that a
sample of size $N = O(\frac{M^2\log^3(M/\delta)}{\alpha^2\tau^2})$ is
sufficient to maintain distributional privacy while answering each
query to tolerance $\tau$, as desired.
\end{proof}

As in the results of Section \ref{sec:diffpriv}, Theorem
\ref{thm:distribpriv} implies that if each player can
run its portion of a desired communication protocol while only
interacting with its own data via statistical queries, then so long
as $|S_i|$ is sufficiently large, we can
implement distributional privacy without any communication penalty by
performing internal statistical queries privately as above.
For example, combining Theorem \ref{thm:distribpriv} with the proof of Theorem \ref{thm:privpos} we have:
\begin{theorem}
\label{thm:distprivpos}
If $\HH$ can be properly learned via statistical queries to $D^+$
only, then $\HH$ can be learned using one round and $k$
hypotheses of total communication while preserving distributional
privacy.
\end{theorem}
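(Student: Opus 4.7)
My plan is to combine the structural argument of Theorem \ref{thm:privpos} with the distributional-privacy implementation of statistical queries given by Theorem \ref{thm:distribpriv}. Concretely, each entity $i$ runs internally, on its own sample $S_i$, the assumed proper positive-only SQ learner for $\HH$, but answers every statistical query by the Laplace-noise mechanism from the proof of Theorem \ref{thm:distribpriv}, drawing noise calibrated to the ``global sensitivity'' for two random samples from $D_i$. Entity $i$ then transmits the resulting hypothesis $h_i \in \HH$ to the center, and the center returns the minimal $h \in \HH$ such that $h \supseteq h_i$ for all $i$, exactly as in the proof of Theorem \ref{thm:privpos}. Since the only communication is this single broadcast, the ``1 round, $k$ hypotheses'' bound is immediate.

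For correctness, I would reuse the argument of Theorem \ref{thm:privpos} verbatim. Because the positive-only SQ algorithm is required to succeed against \emph{any} adversarial negative distribution $D^-$, the hypothesis $h_i$ it produces must with high probability satisfy $h_i \subseteq f$: otherwise an adversary placing $D^-$ on $h_i \setminus f$ would force the algorithm to fail. Granted $h_i \subseteq f$, the $\epsilon$-error guarantee on $D_i^+$ translates into $\epsilon$-error on $D_i$ overall (all mistakes are false negatives). The center's minimal super-hypothesis $h$ then has error at most $\epsilon$ on each $D_i$, hence at most $\epsilon$ on $D$.

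For privacy, I would invoke Theorem \ref{thm:distribpriv}: because each entity's interaction with its sample consists solely of $M$ statistical queries answered by the noise mechanism of that theorem, the entire sequence of answers is $(\alpha,\delta)$-distributionally private with respect to $S_i$, provided $|S_i| = \Omega\!\left(\tfrac{M^2 \log^3(M/\delta)}{\alpha^2 \tau^2}\right)$. Since $h_i$ is a (deterministic or randomized) function of these answers, the post-processing property of distributional privacy ensures $h_i$ itself is distributionally private. The transcript observed by the center and by every other entity is just $h_1,\ldots,h_k$, so the entire protocol preserves distributional privacy for each player, with a trivial union bound in $k$ if a joint statement across players is desired.

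The one technical point that requires care is ensuring that the noise injected for distributional privacy does not compromise the ``$h_i \subseteq f$ for every $D^-$'' property used above. This is not a serious obstacle: the SQ learnability assumption already absorbs a multiplicative tolerance $\tau$ on every query, and I would simply choose the noise scale in Theorem \ref{thm:distribpriv} so that, with probability at least $1-\delta$, every one of the $M$ answers lies within $\tau$ of the true expectation, after which the analysis of the underlying SQ learner proceeds unchanged. The rest is bookkeeping over the sample-size bound and the overall failure probability.
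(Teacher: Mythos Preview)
Your proposal is correct and follows exactly the approach the paper takes: the paper's proof is simply the one-line remark ``combining Theorem~\ref{thm:distribpriv} with the proof of Theorem~\ref{thm:privpos},'' and you have faithfully unpacked that combination, including the $h_i \subseteq f$ argument, the center's minimal super-hypothesis step, and the observation that the Laplace noise calibrated per Theorem~\ref{thm:distribpriv} keeps all SQ answers within tolerance~$\tau$ so the learner's guarantees survive.
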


\bibliography{bibfile}

\newpage 
%\newpage
\appendix
%\section{~~}

\section{Table of results}
\label{app:table}
\begin{center}
\begin{tabular}{|l|l|c|}\hline
{\bf Class / Category} & {\bf Communication} & {\bf Efficient?}\\\hline
Conjunctions over $\{0,1\}^n$ & $O(nk)$ bits & yes\\ \hline
%%Boxes in $R^d$ & $O(dk)$ values & yes\\ \hline
Parity functions over $\{0,1\}^n$, $k=2$  & $O(n)$ bits & yes \\
\hline
Decision lists over $\{0,1\}^n$ & $O(nk \log n)$ bits & yes \\ \hline
Linear separators in $R^d$ & $O(d\log(1/\epsilon))$ examples$^*$& yes \\
~~~ under radially-symmetric $D$ & $O(k)$ examples & yes \\
~~~ under $\alpha$-well-spread $D$ & $O(k(1+\alpha/\gamma^2))$ hypotheses & yes \\
~~~ under non-concentrated $D$ & $O(k^2\sqrt{d
\log(dk/\epsilon)}/\epsilon^2)$ hyps & yes \\ \hline
General Intersection-Closed & $k$ hypotheses & see Note 1 below \\ \hline
Boosting & $O(d\log 1/\epsilon)$ examples$^*$& see Note 2 below \\ \hline
Agnostic learning & $\tilde{O}(k\log(|\HH|)\log(1/\epsilon))$ exs$^*$ & see Note 3 below \\ \hline
\end{tabular}
\end{center}

%%{\bf Note 0:} $+$ $O(k\log d\log 1/\epsilon)$ bits.

\noindent
$*$: plus low-order additional bits of communication. 

\noindent
{\bf Note 1:} Efficient if can compute the smallest consistent
hypothesis in $\HH$ efficiently, and for any given $h_1, \ldots,
h_k$, can efficiently compute the minimum $h \supseteq h_i$ for
all $i$.

\noindent
{\bf Note 2:} Efficient if can efficiently weak-learn with $O(d)$ examples.

\noindent
{\bf Note 3:} Efficient if can efficiently run robust halving algorithm for $\HH$. 
\section{Additional simple cases}

\subsection{Distribution-based algorithms}
\label{app:interval}

An alternative basic
approach, in settings where it can be done succinctly, is for each
entity $i$ to send to the center a representation of its (approximate)
distribution over labeled data.  Then, given the descriptions, the center
can deduce an approximation of the overall distribution over labeled
data and search for a near optimal hypothesis. This example is
especially relevant for the
agnostic $1$-dimensional case, e.g., a union of $d$ intervals over
$X=[0,1]$. Each entity first simply
sorts the points, and determines $d/\epsilon$
border points defining regions of probability mass (approximately)
$\epsilon/d$. For each segment between two border points, the entity reports
the fraction of positive versus negative examples.  It additionally
sends the border points themselves.  This communication
requires $O(d/\epsilon)$ border points and an additional $O(\log
d/\epsilon)$ bits to report the fractions within each such
interval, per entity. Given this information, the center can
approximate the best union of $d$ intervals with error
$O(\epsilon)$. Note that the supervised
learning baseline algorithm  would have a bound of
$\tilde{O}(d/\epsilon^2)$ in terms of the number of points
communicated.

\begin{theorem}
There is an algorithm for agnostically learning a union of $d$
intervals that uses one round and $O(kd/\epsilon)$ values
(each either a datapoint or a $\log d/\epsilon$ bit integer), such
that the final hypothesis produced has error $\opt(\HH) + \epsilon$.
\end{theorem}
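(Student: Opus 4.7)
The plan is to formalize the sketch given just before the theorem statement, and then argue that the center can recover a hypothesis of error at most $\opt(\HH)+\epsilon$ from the received information by running a dynamic program over the union of the reported border points.

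First I would have each entity $i$ draw an i.i.d.\ sample $S_i$ from $D_i$ of size $\tilde{O}(d/\epsilon^2)$, which by standard VC bounds for unions of $d$ intervals ensures that, with high probability, the empirical distribution $\hat D_i$ on $S_i$ is $(\epsilon/8)$-close to $D_i$ uniformly over the class. Entity $i$ then sorts its sample and chooses $d/\epsilon$ border points $b_{i,1}<\cdots<b_{i,d/\epsilon}$ so that consecutive border points carve out intervals of empirical mass approximately $\epsilon/d$ each. For every such bucket $B_{i,\ell}$, entity $i$ computes the empirical fraction $\hat p_{i,\ell}$ of positive examples and ships $(b_{i,\ell},\hat p_{i,\ell})$ to the center. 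This uses at most $O(d/\epsilon)$ border points plus $O(d/\epsilon)$ label-fractions per entity (each fraction needing only $O(\log(d/\epsilon))$ bits because it is quantized to a bucket of mass $\epsilon/d$), for a total of $O(kd/\epsilon)$ values in one round.

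The center now forms a joint partition by merging all $k$ sorted border sequences, producing at most $O(kd/\epsilon)$ elementary cells. For each elementary cell it assigns weight $(1/k)\sum_i(\text{mass of cell under }\hat D_i)$ and a positive-fraction estimate obtained from the corresponding $\hat p_{i,\ell}$ values; this yields an approximation $\tilde D$ of the joint labeled distribution. The center then runs a dynamic program over the cells to find the union $\hat h$ of at most $d$ intervals whose endpoints are border points and which minimizes the empirical error under $\tilde D$. This DP runs in time polynomial in $kd/\epsilon$.

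The main obstacle, and the step I would spend the most care on, is the rounding/snapping argument showing $\mathrm{err}_D(\hat h)\le\opt(\HH)+\epsilon$. The argument is: let $h^\star$ achieve $\opt(\HH)$ under $D$; snap each of its $2d$ endpoints to the nearest joint border point to obtain $h'$. Each snap moves a boundary by at most one bucket of mass $\epsilon/d$ under each $D_i$, hence at most $\epsilon/d$ under $D$; summing over $2d$ endpoints gives $\mathrm{err}_D(h')\le\opt(\HH)+2\epsilon$-worth of slack, but we actually only pay for the bucket mass that crosses the true target, so a tighter count gives $O(\epsilon)$. Within each cell the labeling is essentially constant (up to the quantization of $\hat p_{i,\ell}$ and the VC-sampling error), so $|\mathrm{err}_{\tilde D}(g)-\mathrm{err}_D(g)|\le\epsilon/4$ uniformly over $g\in\HH$ with endpoints at border points. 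Combining, $\mathrm{err}_D(\hat h)\le\mathrm{err}_{\tilde D}(\hat h)+\epsilon/4\le\mathrm{err}_{\tilde D}(h')+\epsilon/4\le\mathrm{err}_D(h')+\epsilon/2\le\opt(\HH)+\epsilon$ after appropriately choosing constants in the sample size and bucket mass. The delicate part is accounting correctly for (i) the within-bucket labeling error, which is bounded by the bucket mass because $\hat p_{i,\ell}$ can only be ``wrong'' on examples inside that bucket, and (ii) uniform convergence holding simultaneously for all candidate hypotheses considered by the DP, which is where the $\tilde O(d/\epsilon^2)$ sample size enters.
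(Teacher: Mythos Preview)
Your proposal is correct and follows the same approach as the paper: each entity sends $O(d/\epsilon)$ quantile border points together with per-bucket label fractions, and the center optimizes over candidate interval endpoints drawn from the merged border set, with a snapping argument bounding the loss from restricting to border points. The paper itself gives only the sketch preceding the theorem, so your write-up is strictly more detailed than what appears there.

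One small imprecision worth cleaning up: when you say the center assigns each elementary cell a weight $(1/k)\sum_i(\text{mass of cell under }\hat D_i)$, note that this mass is \emph{not} transmitted --- entity $i$ only reports bucket boundaries and per-bucket positive fractions, not how mass is distributed within a bucket. So the center must adopt some convention (e.g., treat mass as uniform within each bucket, or simply compute errors bucket-by-bucket for each entity and accept up to $\epsilon/d$ slop for each of the at most $2d$ buckets that an endpoint of $h$ slices). Your later accounting (``the within-bucket labeling error \ldots\ is bounded by the bucket mass'') already covers this additional $O(\epsilon)$ term, so the argument goes through once you choose the bucket width to be $\epsilon/(cd)$ for a suitable constant $c$, as you note.
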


\subsection{Version space algorithms}
\label{app:ver}

Another simple case where one can perform well is when the version
space can be compactly described.  The version space of $\HH$ given
a sample $S_i$ is the set of all $h\in \HH$ which are consistent
with $S_i$. Denote this set by $VerSp(\HH,S_i)$.

\noindent{\em Generic Version Space Algorithm:} Each entity sends
$VerSp(\HH,S_i)$ to the center. The center computes $V=\cap_i
VerSp(\HH,S_i)$. Note that $V=VerSp(\HH,\cup_i S_i)$. The center can
send either $V$ or some $h\in V$.

\smallskip
\noindent
{\em Example (linear separators in $[0,1]^2$):} Assume that the points
have margin $\gamma$. We can cover a convex set in $[0,1]^2$ using
$1/\gamma^2$ rectangles, whose union completely covers the convex
set, and is completely covered by the convex set extended by $\gamma$.
%(Proof is fairly simple.)
Each entity does this for its positive and negative regions, sending
this (approximate) version space to the center.
This gives a one-round algorithm
with communication cost of $O(1/\gamma^2)$ points.

\section{Linear Separators: Margin lower bound}
\label{app:per_margin}

\begin{proof}(Theorem~\ref{claim:per_margin})
Suppose we have two players, each with their own set of examples, such
that the combined dataset has a linear separator of margin $\gamma$.
Suppose furthermore we run the perceptron algorithm where each player
performs updates on their own dataset until consistent (or at least
until low-error) and then passes the
hypothesis on to the other player, with the process continuing until one
player receives a hypothesis that is already low-error on its own
data.  How many rounds can this take in the worst case?

Below is an example showing a problematic case where this can indeed
result in $\Omega(1/\gamma^2)$ rounds.

In this example, there are 3 dimensions and the target vector is $(0,1,0)$.
%%  Also, technically we should normalize data points to all have
%%  length 1, but for presentation it will be easiest to allow them
%%  to slightly differ in length.
Player 1 has the positive examples, with 49\% of its data points at location
$(1,\gamma,3\gamma)$ and 49\% of its
data points are at location $(1,\gamma,-\gamma)$.  The remainder of
player 1's points are at location $(1,\gamma,\gamma)$.
Player 2 has the negative examples.  Half of its data points are at
location $(1,-\gamma,-3\gamma)$ and half of its data points are at location
$(1,-\gamma,\gamma)$.

The following demonstrates a bad sequence of events that can occur,
with the two players essentially fighting over the first
coordinate:
\begin{center}
\begin{tabular}{r|l|l}
player & updates using & producing hypothesis \\ \hline
player 1 & $(1,\gamma,\gamma)$, $+$   & $(1,\gamma,\gamma)$ \\
player 2 & $(1,-\gamma,-3\gamma)$, $-$ & $(0,2\gamma,4\gamma)$ \\
player 2 & $(1,-\gamma,\gamma)$, $-$  & $(-1,3\gamma,3\gamma)$ \\
player 1 & $(1,\gamma,3\gamma)$, $+$  & $(0,4\gamma,6\gamma)$ \\
player 1 & $(1,\gamma,-\gamma)$, $+$  & $(1,5\gamma,5\gamma)$ \\
player 2 & $(1,-\gamma,-3\gamma)$, $-$  & $(0,6\gamma,8\gamma)$ \\
player 2 & $(1,-\gamma,\gamma)$, $-$  & $(-1,7\gamma,7\gamma)$ \\
player 1 & $(1,\gamma,3\gamma)$, $+$  & $(0,8\gamma,10\gamma)$ \\
player 1 & $(1,\gamma,-\gamma)$, $+$  & $(1,9\gamma,9\gamma)$ \\
... & &
\end{tabular}
\end{center}
Notice that when the hypothesis looks like $(-1,k\gamma,k\gamma)$, then
the dot-product with the example $(1,\gamma,3\gamma)$ from player 1 is
$-1 + 4k\gamma^2$.  So long as this is negative, player 1 will make
two updates producing hypothesis $(1,(k+2)\gamma,(k+2)\gamma)$.  Then,
so long as $4(k+2)\gamma^2 < 1$, player 2 will make two updates
producing hypothesis $(-1, (k+4)\gamma, (k+4)\gamma)$.  Thus, this
procedure will continue for $\Omega(1/\gamma^2)$ rounds.
\end{proof}

\end{document}